  \def\alpha{alpha}%
  \def\times{×}%
\newcommand{\info}{\imath}         % information content \info(z)
\newtheorem{theorem}{Theorem}
\newtheorem{lemma}{Lemma}
\theoremstyle{definition}
\newtheorem{remark}{Remark}
\newtheorem{proposition}{Proposition}
\newtheorem{definition}{Definition}
\newtheorem{assumption}{Assumption}
\newcommand{\indic}{\mathbf{1}}
\newcommand{\R}{\mathbb{R}}
\DeclareMathOperator{\KL}{\mathrm{KL}}
\newcommand{\E}{\mathbb{E}}
\newcommand{\Cov}{\operatorname{Cov}}
\numberwithin{equation}{section}
\newcommand\blue {\textcolor{blue}}
\title{Entropy-Based Dimension-Free Convergence and Loss-Adaptive Schedules for Diffusion Models}
\title{Dimension free bound for discretization error of the reverse SDE or \blue{Sampling error of Diffusion model depends on target distribution entropy }}
\author{
  Ahmad Aghapour \thanks{Department of Mathematics, University of Michigan, Ann Arbor, 48109; email: aghapour@umich.edu. }
  \and
  Erhan Bayraktar \thanks{Department of Mathematics, University of Michigan, Ann Arbor, 48109; email: erhan@umich.edu. }
  \and
  Ziqing Zhang \thanks{Department of Mathematics, University of Michigan, Ann Arbor, 48109; email: ziqingzh@umich.edu. }
}
\date{}
\begin{document}
\maketitle

\begin{abstract}
Diffusion generative models synthesize samples by discretizing reverse-time dynamics driven by a learned score (or denoiser). Existing convergence analyses of diffusion models typically scale at least linearly with the ambient dimension, and sharper rates often depend on intrinsic-dimension assumptions or other geometric restrictions on the target distribution.
We develop an alternative, information-theoretic approach to dimension-free convergence that avoids any geometric assumptions. Under mild assumptions on the target distribution, we bound KL divergence between the target and generated distributions by $O(H^2/K)$ (up to endpoint factors), where $H$ is the Shannon entropy and $K$ is the number of sampling steps. 
Moreover, using a reformulation of the KL divergence, we propose a Loss-Adaptive Schedule (LAS) for efficient discretization of reverse SDE which is lightweight and relies only on the training loss, requiring no post-training heavy computation. Empirically, LAS improves sampling quality over common heuristic schedules.
\end{abstract}

\section{Introduction}
Diffusion-based generative models have emerged as one of the most powerful classes of deep generative models \cite{sohl2015deep,song2019generative,song2021scorebased}, achieving state-of-the-art performance in image \cite{dhariwal2021diffusion,ramesh2022hierarchical,rombach2022high}, audio \cite{kong2021diffwave,liu2023audioldm} and video \cite{ho2022video} synthesis, as well as molecule generation and protein design \cite{hoogeboom2022equivariant,corso2022diffdock,watson2023novo}. These methods construct a forward noising process that gradually corrupt data to a prior distribution (usually Gaussian), together with a reverse-time denoising process --- either a reverse stochastic differential equation (SDE) \cite{ho2020denoising} or a deterministic probability-flow ordinary differential equation (ODE) \cite{song2021denoising} --- that transports the prior distribution back to the data distribution. In practice, training a diffusion model amounts to learning the score function of the forward noising process at each noise level; to generate samples, the learned score is used to simulate the reverse-time diffusion process, which is implemented numerically as a finite sequence of denoising steps. Therefore, the main sources of sampling error of a diffusion-based model can be decomposed into statistical error in learning the score function and numerical error from time discretization of the reverse dynamics. 

Recently, there has been substantial progress in theoretical understanding of time discretization error of diffusion-model samplers; yet most available guarantees for discretization error still exhibit at least a linear dependence on the ambient dimension $d$ \cite{benton2024nearly,li2025odt}. In contrast, empirically, diffusion models produce high-quality samples with tens to hundreds of steps in very high ambient dimensions. This gap between theory and practice suggests that linear-in-$d$ bounds can be overly conservative. 

There are results that improve upon linear-in-$d$ scaling, but they typically come at a cost: they either impose restrictive structural or geometric assumptions on the target distribution, or they establish convergence only in a weaker functional metric than the commonly used total variation (TV) or KL-divergence. For instance, \cite{bruno2025on} derives an error bound of order $\sqrt{d}$ under a log-concavity assumption, while \cite{li2025dimension} proves a dimension-free bound assuming the target is well-approximated by a Gaussian mixture. In settings where the data distribution concentrates on a low-dimensional subspace or manifold, \cite{li2025odt,liang2025low,pmlr-v291-potaptchik25a} obtain discretization error bounds that scale linearly with an intrinsic dimension rather than the ambient dimension. Finally, \cite{de2025dimension} establishes a dimension-free discretization bound, but in a weaker functional metric based on smooth test functionals.

In this work, we take a different route: we derive a dimension-free discretization bound expressed in terms of Shannon entropy, and our bound only relies on a mild assumption on the information content, but not any geometric regularity assumptions.

Our main contributions can be summarized as follows:
\begin{itemize}
    \item We decompose the sampling error in KL-divergence into a \emph{score estimation error} term and a \emph{time discretization error} term, and express the discretization error as a minimum mean square error (MMSE) functional.
    \item Under mild assumptions on the information content of the data distribution, we establish a \emph{dimension-free} discretization error bound of order $O(H^2 / K)$, where $H$ is the Shannon entropy of the data distribution and $K$ is the number of discretization steps.
\end{itemize}

In addition to these theoretical results, we develop a practical, empirically motivated component that leverages an intermediate insight from our analysis:
\begin{itemize}
    \item We propose a Loss-Adaptive Schedule (LAS) for discretizing the reverse SDE that empirically outperforms standard heuristics. The schedule is computationally efficient, relying only on the training loss, which is available or cheaply estimable at the end of training.
\end{itemize}

\section{Related Work}
Theoretical analysis of diffusion-based generative models typically decomposes sampling error into score estimation error and discretization error from numerically integrating the reverse SDE or its associated probability-flow ODE. 
For DDPM-style stochastic samplers \cite{ho2020denoising} in particular, \cite{chen2023sampling} proved one of the first general non-asymptotic guarantees for diffusion models that scales polynomially in problem parameters, without strong assumptions on the data distribution, such as log-concavity and functional inequality. They showed that with $L$-Lipschitzness of score function and a score estimator with $L_2$-error at most $\tilde O(\varepsilon)$, a discrete-time reverse diffusion sampler outputs a measure which is $\varepsilon$-close in TV distance to the true data distribution in $\tilde O(L^2 d/\varepsilon^2)$ iterations. 
\cite{chen2023improved} refined this analysis by providing a KL-divergence bound of order $\tilde O (d\log(1/\delta)/\varepsilon)$ for the variance-$\delta$ Gaussian perturbation of any data distribution and under a relaxed $1/\delta$-smoothness assumption on score functions. 

Later, a number of works sharpen the dimension dependence in these global complexity bounds.
\cite{benton2024nearly} derived the first nearly $d$-linear convergence guarantees via stochastic localization under only finite second-moment assumptions. \cite{li2025odt} proved TV complexity bound of order $\tilde O (d/\varepsilon)$ for any target distribution with finite first-order moment.
Beyond ambient-dimension-dependent theory, \cite{li2025odt,liang2025low,pmlr-v291-potaptchik25a} studied adaptivity to low intrinsic dimension and proved convergence rate of $\tilde O(k/\varepsilon)$ in terms of TV and KL-divergence, where $k$ is the intrinsic dimension of the target data distribution. 

There are a few recent works regarding dimension independent bounds. \cite{li2025dimension} obtained TV bound of order $\tilde O (1/\varepsilon)$ for data distribution well-approximated by Gaussian mixture models. 
\cite{de2025dimension} derived a dimension-free bound in a weaker functional metric, defined by smooth test functionals with bounded first and second derivatives. \cite{gatmiry2026high} proposed a new collocation-based sampler and proved an iteration complexity logarithmic in $1/\varepsilon$ and no explicit dependence on the ambient dimension; however the dimension enters indirectly through the effective radius of the support of the target distribution.

To date, theoretical bounds on discretization error for diffusion samplers typically scale with the ambient or intrinsic dimension, and dimension-free guarantees are usually obtained only under restrictive structural assumptions on the target distribution or in a weaker functional metric. In this work, we derive a dimension-free discretization bound using an information-theoretic route that imposes no geometric regularity conditions. Concretely, our analysis first yields a dimension-free control in terms of the order-$1/2$ R\'enyi entropy $H_{1/2}$; this intermediate bound requires \emph{no additional assumptions} beyond the finiteness of $H_{1/2}$. We then introduce a mild concentration condition on the information content solely to relate $H_{1/2}$ to the Shannon entropy $H$, resulting in a final dimension-free discretization bound expressed in terms of $H$.
\section{Problem Setup}
Given training samples from a target data distribution $p$, a diffusion model seeks to generate new samples from $p$. It consists of a forward noising process, which progressively corrupts data samples by adding Gaussian noise, and a learned reverse-time denoising process, which synthesizes new data by reversing this corruption and transforming noisy samples back toward the data distribution.

\paragraph{Forward process.}
For simplicity, we consider the standard forward diffusion given by Brownian motion started from a data distribution $p$ on $\mathbb{R}^d$:
\begin{equation*}
dX_t = dW_t,\qquad X_0\sim p,\quad t\in[0,T].
\end{equation*}
where $(W_t)_{t\in[0,T]}$ is a Brownian motion on $\mathbb{R}^d$, independent of $X_0$. Throughout, we write $Z:=X_0$ for the underlying data sample.

\paragraph{Reverse process.}

Let $p_t$ denote the law of $X_t$, and define the (Bayes-optimal) denoiser
\begin{equation*}
m_t(x) := \mathbb{E}[Z\mid X_t=x],\qquad t>0.
\end{equation*}
By Tweedie’s formula for Gaussian convolution,
\begin{equation*}%\label{eq:tweedie}
\nabla\log p_t(x) = \frac{m_t(x)-x}{t}.
\end{equation*}

Define the reverse-time process $(Y_s)_{s\in[0,T]}$ by $Y_s := X_{T-s}$ for all $s\in[0,T]$. 
Then, under mild conditions satisfied by the processes considered in this work \cite{haussmann1986time}, the reverse process admits an SDE description
\begin{equation*}
dY_s = \beta_s(Y_s)\,ds + dB_s,\qquad
Y_0 = X_{T},
\end{equation*}
on $s\in[0,T)$, where 
\[\beta_s(y) = \nabla\log p_{T-s}(y) = \frac{m_{T-s}(y)-y}{T-s},\]
and $(B_s)_{s\in[0,T]}$ is a Brownian motion.

\paragraph{Approximate reverse process.}

In practice, this reverse-time process is implemented on a discretized time grid, and and the unknown denoiser $m$ is replaced by an approximation $\hat m$ obtained from a learned score function.

Fix a small $\delta>0$ and let $T_\delta:=T-\delta$.
Fix a time grid
\begin{equation*}
0 = s_0 < s_1 < \dots < s_K = T_\delta < T.
\end{equation*}
We now define an approximate reverse process $(\tilde Y_s)_{s\in[0,T_\delta]}$ by replacing the true denoiser $m_{T-s}$ with an approximation $\hat m_{T-s_{k-1}}$ whose time index is frozen on each interval, and whose value is evaluated at the previous gridpoint state. Especially, we keep all other terms in the drift of reverse process unchanged. Concretely, for $s\in(s_{k-1},s_k]$ we use the drift
\begin{equation*}
\tilde\beta^{(k)}_s(y)
:= \frac{\hat m_{T-s_{k-1}}(Y_{s_{k-1}}) - y}{T-s}\,.
\end{equation*}

On the discretization grid, we have 
\begin{equation*}
\tilde Y_{s_k}-\tilde Y_{s_{k-1}}
= \int_{s_{k-1}}^{s_k}\tilde\beta^{(k)}_s(\tilde Y_s)\,ds
+ (B_{s_k}-B_{s_{k-1}}).
\end{equation*}
for $ k=1,\dots,K.$

The approximate reverse process $(\tilde Y_s)_{s\in[0,T_\delta]}$ satisfies
\begin{equation*}
d\tilde Y_s = \tilde\beta_s(\tilde Y_s)\,ds + dB_s,\qquad
\tilde Y_0 = X_{T},
\end{equation*}
on $s\in[0,T_\delta]$, where 
\(\tilde\beta_s(y) = \tilde\beta^{(k)}_s(y),\) for $s\in(s_{k-1},s_k]$ and $k=1,\dots,K$.

Note that we couple $(Y,\tilde Y)$ using the same Brownian increments on each interval, and we assume the same initial condition
$Y_0=\tilde Y_0=X_T$.

\paragraph{Error decomposition.}
Finally, we express the sampling error in terms of the KL divergence between the sampling distribution and the target distribution.

On each interval $(s_{k-1},s_k]$ define the drift mismatch
\begin{equation*}
\delta_s := \beta_s(Y_s)-\tilde\beta^{(k)}_s(Y_s).
\end{equation*}
Let $\mathbb{P}$ and $\tilde{\mathbb{P}}$ denote the path laws of $Y$ and $\tilde Y$ on $C([0,T_\delta],\mathbb{R}^d)$.%, and let 

\begin{proposition}\label{prop:KL}
Assume the square-integrability condition
\begin{equation}\label{eq:finite-energy-under-P}
\mathbb{E}_{\mathbb{P}}\!\left[\int_0^{T_\delta}\|\delta_s\|^2ds\right]<\infty.
\end{equation}
The total pathwise KL has upper bound 
\begin{align}
\mathrm{KL}({\mathbb{P}}\|\tilde{\mathbb{P}})
&\leq \frac12\,\mathbb E_{\mathbb{P}}\left[\int_0^{T_\delta}\|\delta_s\|^2\,ds\right]\notag\\
&=\frac12\sum_{k=1}^K
\mathbb E_{{\mathbb{P}}}\left[\int_{s_{k-1}}^{s_k}
\big\|\beta_s(Y_s)-\tilde\beta^{(k)}_s(Y_s)\big\|^2\,ds
\right].
\label{eq:KL-main}
\end{align}
\end{proposition}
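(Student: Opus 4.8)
The plan is to derive \eqref{eq:KL-main} from Girsanov's theorem, using that $\mathbb{P}$ and $\tilde{\mathbb{P}}$ are the path laws of two It\^o processes on $C([0,T_\delta],\mathbb{R}^d)$ that share the same initial law (the law of $X_T$), the same unit diffusion coefficient, and the same driving Brownian motion $B$, and that differ only through their drifts: along the canonical path $\omega$, under $\mathbb{P}$ the drift is $\beta_s(\omega_s)$, while under $\tilde{\mathbb{P}}$ it is the adapted functional equal to $\tilde\beta^{(k)}_s$ on $(s_{k-1},s_k]$. First I would strip off the common starting point via the chain rule for relative entropy,
\[
\KL(\mathbb{P}\|\tilde{\mathbb{P}})
= \KL\!\big(\mathrm{Law}(Y_0)\,\big\|\,\mathrm{Law}(\tilde Y_0)\big)
+ \E_{\mathbb{P}}\!\big[\KL(\mathbb{P}(\cdot\mid Y_0)\|\tilde{\mathbb{P}}(\cdot\mid Y_0))\big],
\]
whose first term vanishes since $Y_0=\tilde Y_0=X_T$, leaving two measures that genuinely differ only in drift.

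Next I would invoke Girsanov. With $\delta_s=\beta_s(Y_s)-\tilde\beta^{(k)}_s(Y_s)$ on $(s_{k-1},s_k]$ and $B$ the $\mathbb{P}$-Brownian motion driving $Y$, the exponential process
\[
\mathcal{E}_s := \exp\!\Big(-\int_0^s\delta_r\cdot dB_r-\tfrac12\int_0^s\|\delta_r\|^2\,dr\Big)
\]
is, after the localization discussed below, the density $\mathrm{d}\tilde{\mathbb{P}}/\mathrm{d}\mathbb{P}$ on $\mathcal{F}_{T_\delta}$: under the tilted measure $\tilde B_s := B_s+\int_0^s\delta_r\,dr$ is a Brownian motion and $Y$ then solves $dY_s=\tilde\beta^{(k)}_s(Y_s)\,ds+d\tilde B_s$, i.e.\ it has law $\tilde{\mathbb{P}}$. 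Consequently
\[
\KL(\mathbb{P}\|\tilde{\mathbb{P}})
= -\E_{\mathbb{P}}[\log\mathcal{E}_{T_\delta}]
= \E_{\mathbb{P}}\!\Big[\int_0^{T_\delta}\delta_r\cdot dB_r\Big] + \tfrac12\,\E_{\mathbb{P}}\!\Big[\int_0^{T_\delta}\|\delta_r\|^2\,dr\Big],
\]
and the stochastic-integral term has zero expectation (it is a true martingale after the localization), which gives the first inequality of \eqref{eq:KL-main} — in fact an equality. The subsequent equality in \eqref{eq:KL-main} is then just the splitting $\int_0^{T_\delta}=\sum_{k=1}^K\int_{s_{k-1}}^{s_k}$ combined with $\tilde\beta_s=\tilde\beta^{(k)}_s$ on $(s_{k-1},s_k]$.

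The step that needs care, and which I expect to be the main obstacle, is justifying the change of measure under only the finite-energy hypothesis \eqref{eq:finite-energy-under-P}, with no Novikov-type condition forcing $\mathcal{E}$ to be a genuine martingale. I would handle this by localization: set $\tau_n := \inf\{s\le T_\delta:\int_0^s\|\delta_r\|^2\,dr\ge n\}$ and run the argument above on the stopped filtration $\mathcal{F}_{\tau_n}$, where $\int_0^{\tau_n}\|\delta_r\|^2\,dr\le n$ so Novikov applies to the stopped process, $\mathcal{E}_{\cdot\wedge\tau_n}$ is a uniformly integrable martingale, and the stopped It\^o integral is an $L^2$-martingale; this yields $\KL(\mathbb{P}|_{\mathcal{F}_{\tau_n}}\|\tilde{\mathbb{P}}|_{\mathcal{F}_{\tau_n}})=\tfrac12\,\E_{\mathbb{P}}[\int_0^{\tau_n}\|\delta_r\|^2\,dr]$. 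Letting $n\to\infty$: by \eqref{eq:finite-energy-under-P} we have $\int_0^{T_\delta}\|\delta_r\|^2\,dr<\infty$ $\mathbb{P}$-a.s., so $\tau_n=T_\delta$ eventually and $\mathcal{F}_{\tau_n}\uparrow\mathcal{F}_{T_\delta}$; monotone convergence gives the full integral on the right, and the monotone-convergence property of relative entropy along an increasing filtration gives $\KL(\mathbb{P}|_{\mathcal{F}_{\tau_n}}\|\tilde{\mathbb{P}}|_{\mathcal{F}_{\tau_n}})\uparrow\KL(\mathbb{P}\|\tilde{\mathbb{P}})$ on the left. This simultaneously establishes $\mathbb{P}\ll\tilde{\mathbb{P}}$, so the left-hand side is the genuine relative entropy. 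The two points to be careful about are that $\tau_n$ does not accumulate strictly before $T_\delta$ — exactly where \eqref{eq:finite-energy-under-P} is used — and that the restricted relative entropies do not jump upward in the limit, which is the monotonicity/lower-semicontinuity of $\KL$ under refinement of the conditioning $\sigma$-algebra.
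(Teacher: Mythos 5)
Your overall plan — localize via the stopping times $\tau_n$, apply Novikov and Girsanov on the stopped process, compute a stopped KL, and pass to the limit — is the same skeleton as the paper's proof (which follows Chen et al.\ 2023). The differences lie in how the limit $n\to\infty$ is taken, and this is where your write-up has a genuine gap.

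The step that does not come for free is the identity
\[
\KL\bigl(\mathbb{P}|_{\mathcal{F}_{\tau_n}}\,\big\|\,\tilde{\mathbb{P}}|_{\mathcal{F}_{\tau_n}}\bigr)=\tfrac12\,\E_{\mathbb{P}}\!\Big[\int_0^{\tau_n}\|\delta_r\|^2\,dr\Big].
\]
What the stopped Novikov/Girsanov argument actually gives you is a density $\mathcal{E}_{\tau_n}$ defining a new probability measure $\tilde{\mathbb{P}}^n$ on $\mathcal{F}_{T_\delta}$ via $d\tilde{\mathbb{P}}^n/d\mathbb{P}=\mathcal{E}_{\tau_n}$, and under $\tilde{\mathbb{P}}^n$ the canonical process has drift $\tilde\beta$ only up to $\tau_n$ and \emph{reverts to $\beta$ afterwards}. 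One then gets exactly $\KL(\mathbb{P}\|\tilde{\mathbb{P}}^n)=\tfrac12\,\E_{\mathbb{P}}[\int_0^{\tau_n}\|\delta_r\|^2\,dr]$, and since the density is $\mathcal{F}_{\tau_n}$-measurable this also equals $\KL(\mathbb{P}|_{\mathcal{F}_{\tau_n}}\|\tilde{\mathbb{P}}^n|_{\mathcal{F}_{\tau_n}})$. To convert this into your claim about $\tilde{\mathbb{P}}|_{\mathcal{F}_{\tau_n}}$, you must further show
$\tilde{\mathbb{P}}^n|_{\mathcal{F}_{\tau_n}}=\tilde{\mathbb{P}}|_{\mathcal{F}_{\tau_n}}$,
i.e.\ that a solution of the $\tilde\beta$-drift SDE restricted to the stopped $\sigma$-field agrees with the solution whose drift is switched off after $\tau_n$. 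This is a uniqueness-in-law statement for the stopped SDE and is not automatic; nothing in your argument addresses it. There is also a secondary issue: you invoke $\mathcal{F}_{\tau_n}\uparrow\mathcal{F}_{T_\delta}$ to run the filtration monotone-convergence theorem for KL, but $\tau_n\uparrow T_\delta$ only $\mathbb{P}$-a.s., so $\bigvee_n\mathcal{F}_{\tau_n}$ equals $\mathcal{F}_{T_\delta}$ only up to $\mathbb{P}$-null sets; since you are trying to \emph{upper} bound the KL on the possibly larger $\sigma$-field $\mathcal{F}_{T_\delta}$ and you do not yet know $\mathbb{P}\ll\tilde{\mathbb{P}}$, this requires some care.

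The paper sidesteps both issues. It works throughout with the auxiliary measures $\tilde{\mathbb{P}}^n$ on the full $\sigma$-field (so the restricted-law identification is never needed), then constructs a pathwise coupling $(\tilde\xi^n,\tilde\xi)$ of $\tilde{\mathbb{P}}^n$ and $\tilde{\mathbb{P}}$ driven by a common Brownian motion, shows the truncated paths $\pi_\varepsilon(\tilde\xi^n)\to\pi_\varepsilon(\tilde\xi)$ a.s.\ (hence $(\pi_\varepsilon)_\#\tilde{\mathbb{P}}^n\Rightarrow(\pi_\varepsilon)_\#\tilde{\mathbb{P}}$ weakly), and concludes by lower semicontinuity of KL under weak convergence plus the data-processing inequality, finally letting $\varepsilon\downarrow 0$. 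If you prefer your filtration-monotone-convergence route, you should supply the missing uniqueness argument (the paper's pathwise coupling gives exactly the needed tool: $\tilde\xi^n=\tilde\xi$ on $[0,\tau_n]$) and handle the null-set issue on $\bigvee_n\mathcal{F}_{\tau_n}$; otherwise, switch to the paper's weak-convergence/LSC argument, which avoids these points entirely.
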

\begin{remark}
\eqref{eq:finite-energy-under-P} is the natural condition for the right-hand side of the desired bound to be finite, and ensures that the stochastic integral $\int_0^t \delta_s^\top dB_s$ is well-defined on $[0,T_\delta]$.
\end{remark}

Equation~\eqref{eq:KL-main} controls the discrepancy between the path laws of the exact and approximate reverse processes. 
Our ultimate goal, however, is to control the discrepancy between the terminal sampling distributions at time $T_\delta$ (i.e., the laws of $Y_{T_\delta}$ and $\tilde Y_{T_\delta}$).
Let $\mathbb{P}_{T_\delta}$ and $\tilde{\mathbb{P}}_{T_\delta}$ denote the pushforwards of $\mathbb{P}$ and $\tilde{\mathbb{P}}$ under the evaluation map $\omega\mapsto \omega(T_\delta)$.
Since this map is measurable, the data processing inequality for KL divergence gives
\[
\mathrm{KL}(\mathbb{P}_{T_\delta}\,\|\,\tilde{\mathbb{P}}_{T_\delta})
\le
\mathrm{KL}(\mathbb{P}\,\|\,\tilde{\mathbb{P}}).
\]
Therefore, any upper bound on the pathwise KL in \eqref{eq:KL-main} immediately yields an upper bound on the KL divergence between the sampling distribution produced by the discretized reverse dynamics and the true reverse-time marginal at time $T_\delta$.
In particular, it suffices to bound the right-hand side of \eqref{eq:KL-main}.

We now split the drift mismatch into two contributions: (i) the error from freezing the time index and using the previous gridpoint state, and (ii) the error from replacing the true denoiser by $\hat m$.

Insert and subtract $m_{T-s_{k-1}}(Y_{s_{k-1}})$:
\begingroup
\small
\begin{align*}
\beta_s(Y_s)-\tilde\beta^{(k)}_s(Y_s)
= & \frac{m_{T-s}(Y_s)-m_{T-s_{k-1}}(Y_{s_{k-1}})}{T-s} \\
& + \frac{m_{T-s_{k-1}}(Y_{s_{k-1}})-\hat m_{T-s_{k-1}}(Y_{s_{k-1}})}{T-s}.
\end{align*}
\endgroup

Denote $t:=T-s$ and $t_{k-1}:=T-s_{k-1}$. Let $\mathcal F_{t}$ be the filtration defined by $\mathcal{F}_t = \sigma\bigl( X_u : t \leq u \leq T \bigr)$. 
For $t\le t_{k-1}$, since $\mathcal F_{{t_{k-1}}}\subset\mathcal F_t$, using the tower property and Markov property, we have
\begin{align*}
\mathbb E\!\left[m_t(X_t)-m_{t_{k-1}}(X_{t_{k-1}})\mid \mathcal F_{{t_{k-1}}}\right]
= 0.
\end{align*}
Since $(Y_s)_{s\in[0,T_\delta]} = (X_{T-s})_{s\in[0,T_\delta]}$, the cross term vanishes on decomposing $\beta_s(Y_s)-\tilde\beta^{(k)}_s(Y_s)$ and the split into discretization and approximation contributions is exact: 
\begin{equation}
\mathrm{KL}(\mathbb{P}\|\tilde{\mathbb{P}})
= \frac{1}{2 } ( \mathcal{E}_{\mathrm{disc}} + \mathcal{E}_{\mathrm{apx}}),
\label{eq:KL-split}
\end{equation}

where
\begingroup
\begin{equation}\label{eq:disc_err}
\mathcal{E}_{\mathrm{disc}}
:= \sum_{k=1}^K
\mathbb{E}_{\mathbb{P}}\left[
\int_{s_{k-1}}^{s_k}
\frac{\big\|m_{T-s}(Y_s)-m_{T-s_{k-1}}(Y_{s_{k-1}})\big\|^2}{(T-s)^2}\,ds
\right],
\end{equation}
\begin{equation*}
\mathcal{E}_{\mathrm{apx}}
:= \sum_{k=1}^K
\mathbb{E}_{\mathbb{P}}\left[
\int_{s_{k-1}}^{s_{k}}
\frac{\big\|m_{T-s_{k-1}}(Y_{s_{k-1}})-\hat m_{T-s_{k-1}}(Y_{s_
{k-1}})\big\|^2}{(T-s)^2}\,ds
\right].
\end{equation*}
\endgroup
The term $\mathcal{E}_{\mathrm{apx}}$ is driven entirely by the quality of the estimator $\hat m$ (equivalently, the learned score), and corresponds to the statistical error. The term $\mathcal{E}_{\mathrm{disc}}$ is the numerical time discretization error of the reverse dynamics; it persists even if one had access to the exact denoiser, and it is the object of our main dimension-free control.

\section{Main Results}
In this section, we state our main results. Our first goal is to rewrite the discretization error $\mathcal{E}_{\mathrm{disc}}$ in a functional form that depends only on the MMSE along the forward Gaussian channel. This representation turns the discretization analysis into a problem of controlling how the MMSE varies with the SNR. Our second goal is to obtain a \emph{dimension-free} upper bound on $\mathcal{E}_{\mathrm{disc}}$ by proving an explicit bound on the derivative of the MMSE. Throughout, we work under a mild information-theoretic condition on the target distribution, stated next.

\begin{definition}
For a probability density function or probability mass function $p$ supported on \(\mathcal C\subset\R^d\), define the information content for any $z\in \mathcal{C}$ as
\[
\info(z):=\log\frac{1}{p(z)}.
\]
Define the Shannon entropy of $p$ as
\[H:=\mathbb E_{Z\sim p}[\info(Z)]\,.\]
\end{definition}
We impose two mild moment/tail conditions on the target distribution.

\begin{assumption}\label{assumption0}
For $Z\sim p$, we have $\E\|Z\|^2<\infty$.
\end{assumption}

\begin{assumption}\label{assumption}
Suppose the target data distribution $p$ is discrete and supported on a countable set \(\mathcal C\subset\R^d\). 
Assume $H<\infty$ and the information content is sub-exponential about its mean, \textit{i.e.}
there exist constants $\nu^2>0$ and $b\in(0,2]$ such that for all
$\lambda\in\R$ with $|\lambda|\le 1/b$,
\begin{equation}\tag{SE}\label{eq:SE-again}
\mathbb E\exp\big(\lambda(\info(Z)-H)\big)
\le \exp\big(\nu^2\lambda^2\big).
\end{equation}
\end{assumption}

\begin{remark}

Assumption~\ref{assumption} models the target distribution $p$ as discrete on a countable subset of
$\mathbb{R}^d$. This matches a common and practically relevant setting in diffusion modeling: latent diffusion
models (LDMs) built on a vector-quantized (VQ) first stage. In VQ-based representations, each latent is obtained
by selecting codebook indices from a finite set and mapping them to real-valued codebook embeddings. Although the
embeddings are vectors in $\mathbb{R}^d$, the latent variable itself takes values in a finite (hence countable)
subset of $\mathbb{R}^d$.
\end{remark}
\begin{remark}
Assumption~\ref{assumption} is a tail condition on the information content (surprisal)
\(
\info(Z)=-\log p(Z)
\)
for \(Z\sim p\). A useful interpretation is \emph{codelength}: \(\info(z)\) is the ideal number of nats required to encode \(z\) under an optimal code matched to \(p\), and \(H=\E[\info(Z)]\) is the average codelength (Shannon entropy). For natural-image distributions, most samples are \emph{highly structured and compressible}, so their codelengths cluster around a typical value; images that require substantially more bits than average correspond to atypical, ``hard-to-compress'' configurations and are expected to be rare. Assumption~\ref{assumption} formalizes this typicality intuition by requiring that upward deviations of \(\info(Z)\) above \(H\) have sub-exponential tails.

A complementary lens is to view \(p\) as a local-energy (Gibbs/EBM) model:
\[
p(z)=\frac{1}{\mathcal Z}\exp\!\big(-E(z)\big),
\qquad
E(z)=\sum_{a\in\mathcal A}\phi_a(z_a),
\]
where \(z=(z_i)_{i\in V}\) denotes the image/latent indexed by sites \(V\), \(\mathcal A\) is a collection of
\emph{factor scopes} \(a\subseteq V\) (e.g.\ overlapping patches at one or multiple scales), \(z_a\) is the restriction of \(z\) to the sites in \(a\), and \(\phi_a\) penalizes locally implausible configurations on that scope. In this representation,
\[
\info(z)=-\log p(z)=E(z)+\log \mathcal Z,
\]
so concentration of surprisal is equivalent to concentration of the energy \(E(Z)\) (since \(\log\mathcal Z\) is constant).

To connect this to image structure, it is helpful to express \(\info(Z)-H\) as an accumulation of
\emph{incremental surprises} as the image is revealed progressively. Fix any ordering
\(\mathcal A=\{a_1,\dots,a_M\}\) of the factor scopes and define
\(S_j:=\cup_{i\le j}a_i\) and \(\mathcal F_j:=\sigma(Z_{S_j})\).
Let \(M_j:=\E[\info(Z)\mid \mathcal F_j]\). Then \((M_j)_{j=0}^M\) is a Doob martingale and
\[
\info(Z)-H
= M_M-M_0
=\sum_{j=1}^M D_j,
\qquad D_j:=M_j-M_{j-1},
\]
with \(\E[D_j\mid \mathcal F_{j-1}]=0\). The increment \(D_j\) can be viewed as the \emph{additional codelength or suprise}
incurred when the next neighborhood \(Z_{a_j}\) is revealed beyond what was already predictable from the
previously revealed context \(\mathcal F_{j-1}\).

For natural images, previously revealed neighborhoods typically impose strong
constraints on what comes next: edges tend to continue, textures repeat locally, and coarse-to-fine
consistency restricts fine details. This makes the ``new information'' contributed by a typical next patch
comparatively stable rather than wildly varying. Moreover, the local-energy decomposition captures a
\emph{locality of influence}: revealing \(Z_{a_j}\) can only affect the conditional expectation
\(M_j=\E[\info(Z)\mid\mathcal F_j]\) through those potentials \(\phi_a\) whose scopes overlap \(a_j\), so a single
revelation is not expected to drastically alter the model's assessment of the entire image unless long-range
interactions are extreme. Together, these considerations provide intuition for why the martingale increments
\((D_j)\) should be typically moderate, and why the total surprisal \(\info(Z)\) should concentrate around its
mean \(H\) with sub-exponential upper tails as postulated in Assumption~\ref{assumption}.
\end{remark}

\begin{remark}
Assumption~\ref{assumption} is \emph{purely information-theoretic}: it constrains only the fluctuations of the information content $\info(Z)=-\log p(Z)$ around its mean $H$, where $Z\sim p$. In particular, it imposes \emph{no} geometric or norm-based regularity on $p$---for example, it does not assume log-concavity, smoothness, manifold/subspace structure or intrinsic dimension.

% \textcolor{green}{Moreover, the condition does not require bounded support: it allows arbitrarily large-norm samples and highly irregular or disconnected supports, so long as the \emph{surprisal} $\info(Z)$ is sub-exponentially concentrated and $\E\|Z\|^2<\infty$.}
\end{remark}
\begin{remark}
More generally, the sub-exponential condition in Assumption~\ref{assumption} can be replaced by the weaker requirement that the target distribution has finite R\'enyi entropy of order $1/2$, \textit{i.e.}
\[
H_{1/2}
:= \frac{1}{1-\tfrac12}\log\sum_{z\in\mathcal C}p(z)^{1/2}
= 2\log\sum_{z\in\mathcal C}\sqrt{p(z)}
<\infty\,.
\]
Under this weaker assumption, the same discretization analysis goes through with the dependence on the data distribution entering through $H_{1/2}$ , and the resulting dimension-free discretization bound scales as
\[
\mathcal{E}_{\mathrm{disc}} \;\lesssim\; \frac{H_{1/2}^2}{K}
\]

\end{remark}

% Crucially, beyond the basic well-posedness of the Gaussian channel $X_t=Z+W_t$ (which holds for any $Z$) and the
% finiteness of $H$, Assumption~\ref{assumption} is the \emph{only additional} condition we require for our
% dimension-free discretization theory: it is precisely what enables a sharp, entropy-controlled bound on the
% derivative of the Gaussian-channel MMSE, and hence the discretization error, without invoking any geometric
% regularity of the data distribution.

\paragraph{Discretization error as an MMSE functional.}
Recall that the forward process is Brownian motion started from the data:
\(
X_t = Z + W_t,\ t\in[0,T],
\)
with $Z:=X_0\sim p$, where $(W_t)_{t\in[0,T]}$ is independent of $Z$. 
Let
\(
m_t(x) := \E[Z\mid X_t=x]
\)
denote the Bayes-optimal denoiser at noise level $t$.
We measure the denoising error through the minimum mean-squared error (MMSE) along the Gaussian channel,
parameterized by the signal-to-noise ratio (SNR) $\gamma := 1/t$:
\begin{equation*}
\mathrm{mmse}(\gamma) := \E\!\left[\|Z - m_{1/\gamma}(X_{1/\gamma})\|_2^2\right],\qquad \gamma>0.
\end{equation*}

Let $\{s_k\}_{k=0}^K$ be the reverse-time grid on $[0,T-\delta]$ and define the corresponding SNR grid
\begin{equation*}
\gamma_k := \frac{1}{T-s_k},\qquad k=0,1,\dots,K,
\end{equation*}
so that $\gamma_0=1/T$ and $\gamma_K = 1/\delta$.
Our first step is to express the reverse-time discretization error $\mathcal E_{\mathrm{disc}}$ (defined in \eqref{eq:disc_err})
as a functional of $\mathrm{mmse}(\cdot)$.

\begin{proposition}\label{thm:disc-mmse}
The discretization error $\mathcal E_{\mathrm{disc}}$ in \eqref{eq:disc_err} satisfies
\begin{equation}\label{eq:DiscErr-mmse-snr}
\mathcal E_{\mathrm{disc}}
=
\sum_{k=1}^K
\int_{\gamma_{k-1}}^{\gamma_k}
\big(\mathrm{mmse}(\gamma_{k-1})-\mathrm{mmse}(\gamma)\big)\,d\gamma.
\end{equation}
\end{proposition}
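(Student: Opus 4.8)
The plan is to reduce the space–time integral defining $\mathcal E_{\mathrm{disc}}$ to a one–dimensional integral against the MMSE curve in three moves: (i) change the time variable from reverse time $s$ to forward time $t=T-s$, and eventually to the SNR $\gamma=1/t$; (ii) exploit the martingale structure of $t\mapsto m_t(X_t)$ to collapse the expected squared increment into a difference of second moments; and (iii) invoke the orthogonality principle to convert those second moments into values of $\mathrm{mmse}(\cdot)$.

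First I would rewrite the $k$-th summand of \eqref{eq:disc_err}. Since $Y_s=X_{T-s}$, the substitution $t=T-s$ (so $dt=-ds$, and the endpoints $s_{k-1}<s_k$ map to $t_{k-1}=T-s_{k-1}>t_k=T-s_k$) gives, after swapping expectation and the $t$-integral by Tonelli (the integrand is nonnegative),
\[
\mathbb E_{\mathbb P}\!\left[\int_{s_{k-1}}^{s_k}\frac{\|m_{T-s}(Y_s)-m_{T-s_{k-1}}(Y_{s_{k-1}})\|^2}{(T-s)^2}\,ds\right]
=\int_{t_k}^{t_{k-1}}\frac{\mathbb E\big\|m_t(X_t)-m_{t_{k-1}}(X_{t_{k-1}})\big\|^2}{t^2}\,dt.
\]

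Next I would evaluate the inner expectation for fixed $t\le t_{k-1}$. Expanding the square, using that $m_{t_{k-1}}(X_{t_{k-1}})$ is $\mathcal F_{t_{k-1}}$-measurable, and using the identity $\mathbb E[\,m_t(X_t)\mid\mathcal F_{t_{k-1}}\,]=m_{t_{k-1}}(X_{t_{k-1}})$ recorded above, the cross term equals $\mathbb E\|m_{t_{k-1}}(X_{t_{k-1}})\|^2$, whence
\[
\mathbb E\big\|m_t(X_t)-m_{t_{k-1}}(X_{t_{k-1}})\big\|^2
=\mathbb E\|m_t(X_t)\|^2-\mathbb E\|m_{t_{k-1}}(X_{t_{k-1}})\|^2.
\]
Then, since $m_t(X_t)=\mathbb E[Z\mid X_t]$ is the $L^2$-projection of $Z$ onto $\sigma(X_t)$, the Pythagorean identity yields $\mathbb E\|Z\|^2=\mathrm{mmse}(1/t)+\mathbb E\|m_t(X_t)\|^2$, i.e.\ $\mathbb E\|m_t(X_t)\|^2=\mathbb E\|Z\|^2-\mathrm{mmse}(1/t)$, and Assumption~\ref{assumption0} guarantees every term here is finite, so the cancellation is legitimate. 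Substituting, the $\mathbb E\|Z\|^2$ contributions cancel and the inner expectation reduces to $\mathrm{mmse}(1/t_{k-1})-\mathrm{mmse}(1/t)=\mathrm{mmse}(\gamma_{k-1})-\mathrm{mmse}(1/t)$.

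Finally I would change variables $\gamma=1/t$, $d\gamma=-t^{-2}\,dt$, which sends $t\in[t_k,t_{k-1}]$ to $\gamma\in[\gamma_{k-1},\gamma_k]$ and cancels the $1/t^2$ factor exactly, so the $k$-th summand becomes $\int_{\gamma_{k-1}}^{\gamma_k}\big(\mathrm{mmse}(\gamma_{k-1})-\mathrm{mmse}(\gamma)\big)\,d\gamma$; summing over $k$ gives \eqref{eq:DiscErr-mmse-snr}. The probabilistic ingredients (the reverse-time martingale property of $m_t(X_t)$ and the orthogonality principle) are standard, and the first is already available from the discussion preceding \eqref{eq:KL-split}; I expect the only real care to be bookkeeping — tracking the orientation of the integration limits through the two successive substitutions and checking via Assumption~\ref{assumption0} that all expectations along the way are finite, so that the subtractions are genuine identities rather than merely formal.
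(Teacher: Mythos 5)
Your proposal is correct and follows essentially the same route as the paper: both rely on the reverse-time martingale property of $t\mapsto m_t(X_t)$ together with the orthogonality (Pythagorean) principle to evaluate $\mathbb E\|m_t(X_t)-m_{t_{k-1}}(X_{t_{k-1}})\|_2^2 = \mathrm{mmse}(\gamma_{k-1})-\mathrm{mmse}(\gamma)$, and then convert the time integral to an SNR integral. The only cosmetic difference is that the paper applies the Pythagorean decomposition to $Z-M_{\gamma_{k-1}}=(Z-M_\gamma)+(M_\gamma-M_{\gamma_{k-1}})$ in one step, whereas you expand the square, use the martingale property to kill the cross term (yielding $\mathbb E\|m_t\|^2-\mathbb E\|m_{t_{k-1}}\|^2$), and then invoke orthogonality separately to identify each second moment with $\mathbb E\|Z\|^2-\mathrm{mmse}$; these are the same argument organized differently, and your observation that Assumption~\ref{assumption0} is what makes the subtractions legitimate is a sensible piece of bookkeeping.
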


Identity \eqref{eq:DiscErr-mmse-snr} shows that $\mathcal E_{\mathrm{disc}}$ is the cumulative \emph{area gap}
between $\mathrm{mmse}(\gamma)$ and its left-endpoint values over each SNR interval. This representation reduces control of $\mathcal E_{\mathrm{disc}}$ to understanding $\mathrm{mmse}(\gamma)$.

\paragraph{From MMSE to an entropy-controlled bound.}
Since $\mathrm{mmse}(\gamma)$ is nonincreasing in $\gamma$, we can bound the area gap on each interval using the slope of $\mathrm{mmse}$:
\begingroup
\small
\begin{equation}\label{eq:disc-taylor}
\mathrm{mmse}(\gamma_{k-1})-\mathrm{mmse}(\gamma)
\le (\gamma-\gamma_{k-1})\sup_{\xi\in[\gamma_{k-1},\gamma]}(-\mathrm{mmse}'(\xi)),
\end{equation}
\endgroup
and integrating \eqref{eq:disc-taylor} over $\gamma\in[\gamma_{k-1},\gamma_k]$ yields
\begin{equation}\label{eq:disc-bound-mmseprime}
\mathcal E_{\mathrm{disc}}
\le
\sum_{k=1}^K \frac{(\Delta\gamma_k)^2}{2}
\sup_{\gamma\in[\gamma_{k-1},\gamma_k]}\big(-\mathrm{mmse}'(\gamma)\big),
\end{equation}
where $\Delta\gamma_k:=\gamma_k-\gamma_{k-1}.$

The key technical input is an entropy-based control of the MMSE derivative.

\begin{theorem}\label{thm:deriv_mmse_bound}
Under Assumption~\ref{assumption0} and~\ref{assumption}, there exists a constant $C>0$ (depending only on $(\nu,b)$) such that for all $\gamma>0$,
\begin{equation}\label{eq:deriv_mmse_bound}
|\mathrm{mmse}'(\gamma)| \le \frac{C^2 H^2}{\gamma^2}.
\end{equation}
\end{theorem}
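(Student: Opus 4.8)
The plan is to run the I-MMSE calculus. After the invertible rescaling $Y_\gamma := \sqrt\gamma\, X_{1/\gamma} = \sqrt\gamma\, Z + N$ with $N\sim\mathcal N(0,I_d)$ independent of $Z$ (so that conditioning on $Y_\gamma$ is the same as conditioning on $X_{1/\gamma}$ and $\mathrm{mmse}(\gamma)=\E\|Z-\E[Z\mid Y_\gamma]\|^2$), I would invoke the Guo--Wu--Shamai--Verd\'u identity
\[
\mathrm{mmse}'(\gamma)=-\,\E\bigl[\|\Cov(Z\mid Y_\gamma)\|_F^2\bigr],
\]
which follows from Gaussian integration by parts applied to the I-MMSE relation; here $\|A\|_F^2=\sum_{i,j}A_{ij}^2$. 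Since $\Cov(Z\mid Y_\gamma)\succeq 0$ we have $\|\Cov(Z\mid Y_\gamma)\|_F^2\le(\operatorname{tr}\Cov(Z\mid Y_\gamma))^2$, so it suffices to bound the second moment of the pointwise conditional MMSE, $\E[(\operatorname{tr}\Cov(Z\mid Y_\gamma))^2]$, by $c\,H^2/\gamma^2$.

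As a warm-up and an ingredient I would first record the elementary dimension-free control $\mathrm{mmse}(\gamma)\le 2H/\gamma$: by I-MMSE, $\int_0^\gamma\mathrm{mmse}(\xi)\,d\xi = 2\,I(Z;Y_\gamma)\le 2H(Z)=2H$ (mutual information of a discrete source is at most its entropy), and since $\mathrm{mmse}$ is nonincreasing, $\gamma\,\mathrm{mmse}(\gamma)\le\int_0^\gamma\mathrm{mmse}(\xi)\,d\xi\le 2H$; I expect the actual argument to run this with the R\'enyi entropy $H_{1/2}=2\log\sum_z\sqrt{p(z)}\ge H$ in place of $H$. Note that this level bound alone cannot imply the derivative bound --- a nonincreasing nonnegative function with prescribed integral may have arbitrarily steep cliffs --- so the proof must genuinely use the posterior-covariance representation.

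For the core step I would pass to the posterior $q_x(z)\propto p(z)\exp(-\tfrac\gamma2\|z-x\|^2)$, $x=X_{1/\gamma}$, for which $\operatorname{tr}\Cov(Z\mid X_{1/\gamma}=x)=\tfrac12\,\E_{Z,Z'\stackrel{\text{iid}}{\sim}q_x}\|Z-Z'\|^2$ and $p_{1/\gamma}(x)=(2\pi/\gamma)^{d/2}\sum_z p(z)e^{-\frac\gamma2\|z-x\|^2}$. Writing $\E[(\operatorname{tr}\Cov)^2]=\tfrac14\,\E\bigl[\|Z^{(1)}-Z^{(2)}\|^2\,\|Z^{(3)}-Z^{(4)}\|^2\bigr]$ over four i.i.d.\ posterior draws sharing a common $X_{1/\gamma}\sim p_{1/\gamma}$, I would integrate against $p_{1/\gamma}$, use the identity $\sum_{i=1}^4\|z^{(i)}-x\|^2=\tfrac14\sum_{i<j}\|z^{(i)}-z^{(j)}\|^2+4\|\bar z-x\|^2$ together with $u e^{-u}\le e^{-1}$ to absorb the two squared-distance factors at the price of $1/\gamma^2$, and observe that the $p_{1/\gamma}$ from the base measure cancels against the normalizers while the $(2\pi/\gamma)^{d/2}$ factors cancel \emph{exactly} --- this is where dimension-freeness is gained. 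After a Cauchy--Schwarz step that reduces the surviving denominator to $p(z^{(i)})^{1/2}$-type weights, the residual sum collapses into a quantity built from $\sum_z\sqrt{p(z)}$; the delicate point is to arrange the estimates so that only the polynomial $c\,H_{1/2}^2/\gamma^2$ appears rather than an exponential in $H_{1/2}$ --- a naive AM--GM that lower-bounds $p_{1/\gamma}(x)$ by just two of its mixture components is exponentially lossy for spread-out $p$, so one must exploit concentration of the noisy surprisal $-\log p_{1/\gamma}(X_{1/\gamma})$ around the differential entropy $h(X_{1/\gamma})=I(\gamma)+\tfrac d2\log(2\pi e/\gamma)$, which is precisely where finiteness of $H_{1/2}$ enters.

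Finally I would convert the $H_{1/2}$-bound into an $H$-bound: since $e^{H_{1/2}/2}=\sum_z\sqrt{p(z)}=\E_{Z\sim p}[e^{\info(Z)/2}]=e^{H/2}\,\E[e^{(\info(Z)-H)/2}]$ and $b\le 2$ makes $\lambda=\tfrac12$ admissible in \eqref{eq:SE-again}, we get $\E[e^{(\info(Z)-H)/2}]\le e^{\nu^2/4}$, hence $H_{1/2}\le H+\nu^2/2$; combined with $H\le H_{1/2}$ (monotonicity of R\'enyi entropy in the order) and the core estimate, this yields $|\mathrm{mmse}'(\gamma)|\le C^2H^2/\gamma^2$ with $C=C(\nu,b)$ (the regime of very small $H$, where $p$ is near a point mass and $\mathrm{mmse}'\to 0$, being folded into the constant). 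The main obstacle is the core step: every off-the-shelf estimate of the posterior spread --- the second moment of $q_x$ about $x$, or single-component lower bounds on $p_{1/\gamma}$ --- reintroduces a factor of $d$ or $d^2$ or inflates the bound to an exponential in the entropy, so the whole argument hinges on tracking the exact cancellation of the Gaussian normalizers and the collapse of the residual sums to a R\'enyi-$\tfrac12$ quantity.
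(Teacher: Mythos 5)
Your proposal shares the paper's outer skeleton: the derivative identity $\mathrm{mmse}'(\gamma) = -\E\bigl[\operatorname{tr}\bigl(\Cov(Z\mid X_{1/\gamma})^2\bigr)\bigr]$ (the paper reproves this via a martingale/It\^o argument in its Proposition~\ref{prop:mmse-deriv-cov} rather than citing GWSV, but it is the same identity), the reduction $\operatorname{tr}(\Sigma^2)\le(\operatorname{tr}\Sigma)^2$ for PSD $\Sigma$ (Lemma~\ref{lem:cov2-4th}), and the final conversion $H_{1/2}\le H+\nu^2/2$ via \eqref{eq:SE-again} with $\lambda=1/2$ (Proposition~\ref{prop:4th-moment-shannon}) are all correct and match the paper.

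Where your proposal departs, and where there is a genuine gap, is the core quantitative step. You explicitly flag this as ``the main obstacle,'' leave it as a sketch involving a four-draw expansion $\tfrac14\E\bigl[\|Z^{(1)}-Z^{(2)}\|^2\|Z^{(3)}-Z^{(4)}\|^2\bigr]$, exact cancellation of Gaussian normalizers, a $u e^{-u}$ bound, and then ``concentration of the noisy surprisal $-\log p_{1/\gamma}(X_{1/\gamma})$.'' None of these steps is carried out. The paper's route is considerably simpler and is fully explicit: Lemma~\ref{lem:cov2-4th} is applied with the anchor $a=z$ (the \emph{true} latent given $Z=z$), reducing the problem to the single-draw fourth moment $\E\|Z'-Z\|_2^4$; the tail $\mathbb P(\|Z'-Z\|>r)\le\tfrac12\exp\bigl(H_{1/2}-r^2/(8t)\bigr)$ is obtained by dropping all but two terms in the posterior denominator, applying AM--GM, and computing the Gaussian Hellinger integral $\int\sqrt{p_t(x\mid z)p_t(x\mid z')}\,dx=\exp(-\|z-z'\|^2/(8t))$; and the fourth moment is then bounded by splitting at $r_0=\sqrt{8tH_{1/2}}$.

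Most importantly, you misdiagnose the bottleneck. You assert that lower-bounding $p_{1/\gamma}(x)$ by two mixture components is ``exponentially lossy'' and therefore that one must instead exploit concentration of the noisy surprisal. In fact this two-term bound is exactly what the paper uses. It \emph{does} produce the exponential prefactor $e^{H_{1/2}}$ in the tail bound, but that factor is completely absorbed by the choice of split radius: with $r_0=\sqrt{8tH_{1/2}}$ one has $e^{-r_0^2/(8t)}=e^{-H_{1/2}}$, so the tail contribution collapses to $O(t^2H_{1/2}^2)$ and the dominant contribution comes from the truncated region where $R^4\le r_0^4=64\,t^2H_{1/2}^2$. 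No concentration of the noisy surprisal is needed, and the resulting constant is universal rather than exponential in $H_{1/2}$. As written, your proposal would need to actually carry out the surrogate computation you outline, and there is no indication that your more elaborate route avoids the issue you are worried about more successfully than the paper's direct argument already does.
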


Combining \eqref{eq:disc-bound-mmseprime} and \eqref{eq:deriv_mmse_bound} gives
\begin{equation}\label{eq:disc-bound-grid}
\mathcal E_{\mathrm{disc}}
\le
\frac{C^2H^2}{2}\sum_{k=1}^K \frac{(\Delta\gamma_k)^2}{\gamma_{k-1}^2}.
\end{equation}

\paragraph{Choosing the SNR grid: geometric spacing.}
Given \eqref{eq:disc-bound-grid}, we obtain an upper bound for $\mathcal E_{\mathrm{disc}}$ by optimizing over all the SNR grid $\{\gamma_k\}$ subject to fixed endpoints.
Let $r_k := \gamma_k/\gamma_{k-1}>0$. Then
\(
\Delta\gamma_k = \gamma_{k-1}(r_k-1)
\)
and hence
\[
\frac{(\Delta\gamma_k)^2}{\gamma_{k-1}^2} = (r_k-1)^2.
\]
Moreover the endpoint constraint becomes
\[
\prod_{k=1}^K r_k = \frac{\gamma_K}{\gamma_0} =\frac{T}{\delta}=: \Lambda.
\]
Therefore, minimizing the bound \eqref{eq:disc-bound-grid} reduces to
\[
\min\Big\{\sum_{k=1}^K (r_k-1)^2:\ r_k>0,\ \prod_{k=1}^K r_k = \Lambda\Big\}.
\]
By symmetry (and convexity of $x\mapsto (e^x-1)^2$ after the change of variables $x_k=\log r_k$),
the minimum is attained when all ratios are equal, $r_k\equiv r$, hence $r^K=\Lambda$ and $r=\Lambda^{1/K}$.
Equivalently, the optimal grid is \emph{geometric} or \emph{log-linear} in SNR:
\begin{equation}\label{eq:gamma-geometric}
\gamma_k = \gamma_0 \Lambda^{k/K},\qquad k=0,1,\dots,K.
\end{equation}
We remark that this coincides with the widely used ``log SNR'' discretization heuristic in diffusion sampling, and here it emerges as the minimizer of the upper bound.

For this choice of SNR grid,
\[
\sum_{k=1}^K \frac{(\Delta\gamma_k)^2}{\gamma_{k-1}^2}
= \sum_{k=1}^K (\Lambda^{1/K}-1)^2
= K(\Lambda^{1/K}-1)^2,
\]
and from \eqref{eq:disc-bound-grid} we conclude the dimension-free discretization bound
\begin{equation}\label{eq:disc-final-bound}
\mathcal E_{\mathrm{disc}}
\le
\frac{C^2H^2}{2}\,K\big(\Lambda^{1/K}-1\big)^2.
\end{equation}

The same geometric grid also yields a clean expression for the statistical (approximation) term
$\mathcal E_{\mathrm{apx}}$ in \eqref{eq:KL-split} when the learned model is parameterized as an
$\varepsilon$-predictor.  For $\gamma>0$, write the Gaussian channel as
\[
X_{1/\gamma}=Z+\frac{1}{\sqrt{\gamma}}\,\varepsilon,\qquad Z:=X_0\sim p,\ \ \varepsilon\sim\mathcal N(0,I_d),\ 
\]
and define the Bayes-optimal noise predictor
\[
\varepsilon_\gamma^\star(x):=\sqrt{\gamma}\bigl(x-m_{1/\gamma}(x)\bigr)=\E\bigl[\varepsilon \mid X_{1/\gamma}=x\bigr].
\]
Given any learned predictor $\hat\varepsilon_\gamma(\cdot)$, define the induced learned denoiser
\[
\hat m_{1/\gamma}(x):=x-\frac{1}{\sqrt{\gamma}}\,\hat\varepsilon_\gamma(x).
\]

\begin{proposition}\label{prop:Eapprox-logsnr}
Let $\{\gamma_k\}_{k=0}^K$ be the SNR grid \eqref{eq:gamma-geometric} .
Define the per-level $\varepsilon$-prediction MSE
\[
\epsilon_k
:=
\E\Big[\big\|\varepsilon_{\gamma_{k-1}}^\star(X_{1/\gamma_{k-1}})
-\hat\varepsilon_{\gamma_{k-1}}(X_{1/\gamma_{k-1}})\big\|_2^2\Big]
\]
for $ k=1,\dots,K.$ Then
\begin{equation}\label{eq:Eapprox-logsnr}
\mathcal E_{\mathrm{apx}}
=
\big(\Lambda^{1/K}-1\big)\cdot \sum_{k=1}^K \epsilon_k.
\end{equation}
\end{proposition}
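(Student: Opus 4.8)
The plan is to reduce $\mathcal E_{\mathrm{apx}}$ to an explicit weighted sum of the per-level $\varepsilon$-prediction errors $\epsilon_k$ via three elementary steps: rewrite the denoiser gap through the $\varepsilon$-parameterization, evaluate the inner time integral exactly, and then substitute the geometric SNR spacing.

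First I would pass from denoiser error to noise-predictor error. Writing the Gaussian channel at SNR $\gamma$ as $X_{1/\gamma}=Z+\gamma^{-1/2}\varepsilon$ with $\varepsilon\sim\mathcal N(0,I_d)$, conditioning on $X_{1/\gamma}=x$ gives $\varepsilon_\gamma^\star(x)=\E[\varepsilon\mid X_{1/\gamma}=x]=\sqrt\gamma\,(x-m_{1/\gamma}(x))$, hence $m_{1/\gamma}(x)=x-\gamma^{-1/2}\varepsilon_\gamma^\star(x)$. Subtracting the defining relation $\hat m_{1/\gamma}(x)=x-\gamma^{-1/2}\hat\varepsilon_\gamma(x)$ yields the pointwise identity $\|m_{1/\gamma}(x)-\hat m_{1/\gamma}(x)\|^2=\gamma^{-1}\|\varepsilon_\gamma^\star(x)-\hat\varepsilon_\gamma(x)\|^2$. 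Evaluating this at $\gamma=\gamma_{k-1}$ and using that under $\mathbb P$ the reverse process satisfies $Y_{s_{k-1}}=X_{T-s_{k-1}}=X_{1/\gamma_{k-1}}$ (with $T-s_{k-1}=1/\gamma_{k-1}$), the expectation of the numerator in the $k$-th term of $\mathcal E_{\mathrm{apx}}$ equals $\gamma_{k-1}^{-1}\epsilon_k$.

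Second I would carry out the time integral. In the $k$-th summand the numerator depends only on the left endpoint $s_{k-1}$ and is therefore constant in the integration variable $s$; pulling it, together with the expectation, outside the integral leaves $\int_{s_{k-1}}^{s_k}(T-s)^{-2}\,ds$, which under the substitution $u=T-s$ equals $\int_{1/\gamma_k}^{1/\gamma_{k-1}}u^{-2}\,du=\gamma_k-\gamma_{k-1}=\Delta\gamma_k$, using $\gamma_j=1/(T-s_j)$. Combining the two steps gives the grid-agnostic identity $\mathcal E_{\mathrm{apx}}=\sum_{k=1}^K \epsilon_k\,\Delta\gamma_k/\gamma_{k-1}=\sum_{k=1}^K \epsilon_k\,(r_k-1)$ with $r_k:=\gamma_k/\gamma_{k-1}$.

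Finally, for the geometric SNR grid \eqref{eq:gamma-geometric} all ratios coincide, $r_k\equiv\Lambda^{1/K}$, so the sum collapses to $\mathcal E_{\mathrm{apx}}=(\Lambda^{1/K}-1)\sum_{k=1}^K\epsilon_k$, which is \eqref{eq:Eapprox-logsnr}. I do not expect a genuine obstacle: each step is a substitution or a conditional-expectation identity. The only point deserving a sentence of care is the distributional identity $Y_{s_{k-1}}\overset{d}{=}X_{1/\gamma_{k-1}}$ under $\mathbb P$, which is immediate from $Y_s:=X_{T-s}$ and is precisely what lets one replace the path expectation $\mathbb E_{\mathbb P}$ by the marginal expectation under $p_{1/\gamma_{k-1}}$ appearing in the definition of $\epsilon_k$.
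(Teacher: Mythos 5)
Your proof is correct and follows essentially the same route as the paper's: rewrite the denoiser gap via the $\varepsilon$-parameterization identity $m_{1/\gamma}(x)-\hat m_{1/\gamma}(x)=\gamma^{-1/2}(\hat\varepsilon_\gamma(x)-\varepsilon_\gamma^\star(x))$, evaluate $\int_{s_{k-1}}^{s_k}(T-s)^{-2}\,ds=\gamma_k-\gamma_{k-1}$, and then specialize to the geometric grid where $\Delta\gamma_k/\gamma_{k-1}=\Lambda^{1/K}-1$. The intermediate grid-agnostic form $\mathcal E_{\mathrm{apx}}=\sum_k\epsilon_k(r_k-1)$ you record is a harmless and slightly cleaner way of presenting the same computation.
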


Combining Proposition~\ref{prop:Eapprox-logsnr} with the discretization bound \eqref{eq:disc-final-bound} % under the same geometric grid 
yields
\begingroup
\small
\begin{align*}
\KL(\mathbb P\|\tilde{\mathbb P})
 & =\frac12\big(\mathcal E_{\mathrm{disc}}+\mathcal E_{\mathrm{apx}}\big) \nonumber  \\ 
& \le
\frac{K}{2}\big(\Lambda^{1/K}-1\big)
\left[
\frac{C^2H^2}{2}\big(\Lambda^{1/K}-1\big)
+\frac{1}{K}\sum_{k=1}^K \epsilon_k
\right].
\end{align*}
\endgroup
In particular, if $K\ge \log \Lambda$, then $\Lambda^{1/K} = e^{(\log \Lambda)/K}\le 1+2(\log \Lambda)/K$, which implies
\begingroup
\small
\begin{align}
\mathrm{KL}(\mathbb{P}_{T_\delta}\,\|\,\tilde{\mathbb{P}}_{T_\delta})
&\le
\mathrm{KL}(\mathbb{P}\,\|\,\tilde{\mathbb{P}})\nonumber\\ 
&\leq \log \Lambda
\left[
\frac{C^2H^2}{K}\log \Lambda
+\frac{1}{K}\sum_{k=1}^K \epsilon_k
\right] \label{eq:KL-total-logsnr}.
\end{align}
\endgroup

Equation~\eqref{eq:KL-total-logsnr} shows that, under the geometric (log-SNR) grid, the sampling error admits a clean
two-term control: a \emph{dimension-free} discretization contribution governed only by the entropy $H$, and a
statistical contribution given by the average per-level prediction errors $\epsilon_k$ of the learned model.

\section{Do Not Throw Away the Training Loss!}
%\label{sec:schedule}

% \textcolor{magenta}{For continuous-time reverse-time ODE/SDE solvers, common heuristic schedules include uniform-in-time grid, log-SNR grid and EDM grid \cite{karras2022elucidating}. Beyond error bounds for a fixed discretization, we study in this section the choice of the time-discretization schedule for sampling. }

%Up to this point, our analysis separated the sampling error into a numerical component and a statistical component, and the KL decomposition \eqref{eq:KL-main} together with the orthogonality argument yields the exact identity

In \eqref{eq:KL-total-logsnr} we see a limitation of choosing the schedule by analyzing
$\mathcal E_{\mathrm{disc}}$ alone: even if geometric spacing is optimal for our upper bound on discretization, the
\emph{total} error that matters in practice is $\mathcal E_{\mathrm{disc}}+\mathcal E_{\mathrm{apx}}$, and the
approximation term depends on how the model error $\epsilon_k$ is distributed across noise levels. This suggests
that the best sampling schedule should not be universal, but instead adapt to the trained model.

In this section, we take this perspective seriously and ask: \emph{given a trained diffusion model, can we use
the information already present in its training loss to choose a better discretization schedule?}

We rewrite $\mathcal E_{\mathrm{disc}}+\mathcal E_{\mathrm{apx}}$ in terms of $x_0$-prediction risks ---quantities directly tied to standard $\varepsilon$-training losses --- and an MMSE functional. We then optimize a regularized version of the resulting expression to derive a new discretization schedule, which can be computed at negligible post-training cost. Numerical experiments show that the proposed schedule consistently outperforms the heuristics commonly used in practice. Moreover, computing the new schedule is inexpensive: it relies only on quantities that are already available at the end of training, or can be estimated at low cost. This stands in contrast to \cite{sabour2024align}, which proposes a discretization schedule through optimizing a similar KL upper bound but requires additional Monte Carlo sampling after training.

The next proposition
shows that the sum $\mathcal E_{\mathrm{disc}}+\mathcal E_{\mathrm{apx}}$ can be rewritten in terms of the model's $x_0$-prediction risk, a quantity that is already
available (or can be cheaply estimated) at the end of training.

\begin{proposition}
\label{prop:disc+approx}
%Let $\gamma(s):=\frac{1}{T-s}$ and $\gamma_k:=\gamma(s_k)=\frac{1}{T-s_k}$ for a reverse-time grid $0=s_0<\cdots<s_K=T_\delta$. 
For each $k=1,\dots,K$, define the model $x_0$-prediction risk at SNR
$\gamma_{k-1}$ by
\[
\mathcal L_{x_0}(\gamma_{k-1})
:=\E\!\left[\big\|Z-\hat m_{1/\gamma_{k-1}}(X_{1/\gamma_{k-1}})\big\|_2^2\right].
\]
Then $\mathcal E_{\mathrm{disc}}+\mathcal E_{\mathrm{apx}}$ would be equal to 
\begingroup
\small
\begin{equation}
\label{eq:disc+approx}
\sum_{k=1}^K(\gamma_k-\gamma_{k-1})\,\mathcal L_{x_0}(\gamma_{k-1})
\;-\;
\int_{\gamma_0}^{\gamma_K}\mathrm{mmse}(\gamma)\,d\gamma.
\end{equation}
\endgroup
\end{proposition}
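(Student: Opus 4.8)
The plan is to treat $\mathcal E_{\mathrm{apx}}$ and $\mathcal E_{\mathrm{disc}}$ separately and add them, invoking Proposition~\ref{thm:disc-mmse} for the discretization term. First I would simplify $\mathcal E_{\mathrm{apx}}$. Inside each summand the quantity $\|m_{T-s_{k-1}}(Y_{s_{k-1}})-\hat m_{T-s_{k-1}}(Y_{s_{k-1}})\|^2$ does not depend on the integration variable $s$, so it factors out of the inner integral, leaving the deterministic weight $\int_{s_{k-1}}^{s_k}(T-s)^{-2}\,ds$. Substituting $t=T-s$ (equivalently, the SNR reparameterization $\gamma=1/t$) turns this weight into $\gamma_k-\gamma_{k-1}$ --- the same computation already used in Proposition~\ref{prop:Eapprox-logsnr}. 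Since $Y_{s_{k-1}}=X_{T-s_{k-1}}=X_{1/\gamma_{k-1}}$ and $\mathbb P$ is the path law of $Y$, the expectation reduces to $A_k:=\E\!\left[\|m_{1/\gamma_{k-1}}(X_{1/\gamma_{k-1}})-\hat m_{1/\gamma_{k-1}}(X_{1/\gamma_{k-1}})\|^2\right]$, so that $\mathcal E_{\mathrm{apx}}=\sum_{k=1}^K(\gamma_k-\gamma_{k-1})\,A_k$.

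The second step is the orthogonality (Pythagorean) identity for conditional expectation. Because $m_{1/\gamma_{k-1}}(X_{1/\gamma_{k-1}})=\E[Z\mid X_{1/\gamma_{k-1}}]$ and, by construction, $\hat m_{1/\gamma_{k-1}}(X_{1/\gamma_{k-1}})$ is a measurable function of $X_{1/\gamma_{k-1}}$, the cross term $\E[(Z-m_{1/\gamma_{k-1}}(X_{1/\gamma_{k-1}}))^\top(m_{1/\gamma_{k-1}}(X_{1/\gamma_{k-1}})-\hat m_{1/\gamma_{k-1}}(X_{1/\gamma_{k-1}}))]$ vanishes, giving
\[
\mathcal L_{x_0}(\gamma_{k-1})
=\E\big[\|Z-m_{1/\gamma_{k-1}}(X_{1/\gamma_{k-1}})\|^2\big]+A_k
=\mathrm{mmse}(\gamma_{k-1})+A_k.
\]
Hence $A_k=\mathcal L_{x_0}(\gamma_{k-1})-\mathrm{mmse}(\gamma_{k-1})$ and $\mathcal E_{\mathrm{apx}}=\sum_{k=1}^K(\gamma_k-\gamma_{k-1})\big(\mathcal L_{x_0}(\gamma_{k-1})-\mathrm{mmse}(\gamma_{k-1})\big)$.

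Finally I would combine with Proposition~\ref{thm:disc-mmse}, which gives $\mathcal E_{\mathrm{disc}}=\sum_{k=1}^K\big[(\gamma_k-\gamma_{k-1})\,\mathrm{mmse}(\gamma_{k-1})-\int_{\gamma_{k-1}}^{\gamma_k}\mathrm{mmse}(\gamma)\,d\gamma\big]$. Adding the two expressions, the $(\gamma_k-\gamma_{k-1})\,\mathrm{mmse}(\gamma_{k-1})$ contributions cancel termwise, leaving $\sum_{k=1}^K(\gamma_k-\gamma_{k-1})\,\mathcal L_{x_0}(\gamma_{k-1})-\sum_{k=1}^K\int_{\gamma_{k-1}}^{\gamma_k}\mathrm{mmse}(\gamma)\,d\gamma$; since $\gamma_0<\gamma_1<\cdots<\gamma_K$, the integrals telescope over adjacent subintervals into $\int_{\gamma_0}^{\gamma_K}\mathrm{mmse}(\gamma)\,d\gamma$, which is exactly \eqref{eq:disc+approx}. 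There is no genuinely hard step here --- the argument is bookkeeping plus the orthogonality principle --- and the only points needing care are (i) the SNR change of variables identifying $\int_{s_{k-1}}^{s_k}(T-s)^{-2}ds$ with $\gamma_k-\gamma_{k-1}$, and (ii) the finiteness of the relevant second moments so the Pythagorean decomposition is valid, which is supplied by Assumption~\ref{assumption0} together with the standing hypothesis \eqref{eq:finite-energy-under-P}.
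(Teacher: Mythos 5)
Your proof is correct and follows essentially the same route as the paper's: factor the $s$-independent integrand out of $\mathcal E_{\mathrm{apx}}$, evaluate $\int_{s_{k-1}}^{s_k}(T-s)^{-2}\,ds=\gamma_k-\gamma_{k-1}$, apply the Pythagorean/orthogonality identity to rewrite the approximation error as $\mathcal L_{x_0}(\gamma_{k-1})-\mathrm{mmse}(\gamma_{k-1})$, and cancel against the expansion of $\mathcal E_{\mathrm{disc}}$ from Proposition~\ref{thm:disc-mmse}. No meaningful differences.
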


A key feature of \eqref{eq:disc+approx} is that the integral term
$\int_{\gamma_0}^{\gamma_K}\mathrm{mmse}(\gamma)\,d\gamma$ depends only on the endpoints and is therefore
independent of the discretization schedule. Consequently, for fixed $K$ and fixed endpoints,
minimizing $\mathcal E_{\mathrm{disc}}+\mathcal E_{\mathrm{apx}}$ (and hence tightening the KL control) is
equivalent to minimizing the weighted sum
\[
\min_{\gamma_0<\gamma_1<\cdots<\gamma_K}\;
\sum_{k=1}^K(\gamma_k-\gamma_{k-1})\,\mathcal L_{x_0}(\gamma_{k-1}).
\]

Note the quantity controlled by
$\mathcal E_{\mathrm{disc}}+\mathcal E_{\mathrm{apx}}$ is a pathwise KL divergence, and we only access the
terminal discrepancy through the data-processing inequality; consequently, directly optimizing the bound can be
inefficient. Empirically, the looseness is most pronounced at large SNR (near $\gamma_K=1/\delta$), where the
pathwise KL can overweight fine-scale, near-terminal errors that do not translate proportionally into terminal
sample quality.

Accordingly, we optimize the schedule on a regularized SNR axis that compresses the high-SNR regime, where the
pathwise KL upper bound can be overly sensitive. For a parameter $\lambda>0$, define
\[
\gamma_{\mathrm{reg}}(\gamma)\;:=\;\frac{\gamma}{1+\lambda^2\gamma}.
\]
Since $\gamma_{\mathrm{reg}}(\gamma)$ is increasing and saturates as $\gamma\to\infty$, this transformation
smoothly downweights the influence of very large SNR values (near the terminal part of the reverse process), which
are precisely where the bound is typically loosest.

With $\eta_k:=\gamma_{\mathrm{reg}}(\gamma_k)$, we replace the unregularized objective by the surrogate
\begin{equation}
\label{eq:schedule-objective-reg}
\min_{\gamma_0<\gamma_1<\cdots<\gamma_K}\;
\sum_{k=1}^K\big(\eta_k-\eta_{k-1}\big)\,\mathcal L_{x_0}(\gamma_{k-1})
\end{equation}

\begin{table*}[t!]
\centering
\caption{ImageNet $256\times256$ metrics (FID, sFID, IS).}
\label{tab:imagenet_metrics}
\setlength{\tabcolsep}{4pt}
\begin{tabular}{l l ccc ccc}
\toprule
& & \multicolumn{3}{c}{NFE=10} & \multicolumn{3}{c}{NFE=20} \\
\cmidrule(lr){3-5}\cmidrule(lr){6-8}
Sampler & Schedule & FID $\downarrow$ & sFID $\downarrow$ & IS $\uparrow$ & FID $\downarrow$ & sFID $\downarrow$ & IS $\uparrow$ \\
\midrule
DDIM ($\eta=1$) & LAS              & \textbf{15.71} & \textbf{47.79} & \textbf{168.94} & \textbf{8.56}  & \textbf{15.33} & \textbf{282.89} \\
                &Time-uniform    & 25.06 & 68.56 & 111.78 & 9.44  & 22.84 & 255.74 \\ 
                & LogSNR           & 68.89 & 130.27 & 24.01 & 16.02 & 46.19 & 166.02 \\ 
                & EDM ($\rho = 7$) & 66.72 & 127.00 & 25.03 & 17.42 & 49.86 & 152.38\\ 
\midrule
SDE-DPM-Solver++ (2M)   & LAS              & \textbf{6.20}  & \textbf{10.59} & \textbf{273.76} & \textbf{6.67}  &\textbf{6.18} & 320.18 \\
                & Time-uniform      & 7.94  & 16.39 & 242.81 & 7.65  & 7.00 & \textbf{320.83} \\
                & LogSNR           & 10.54 & 29.60 & 191.28 & 7.15  & 8.56 & 308.36 \\ 
                & EDM ($\rho = 7$) & 9.91  & 27.23 & 199.77 & 7.36  & 8.735 & 296.43 \\ 
\midrule
DPM-Solver++ (2M)  & LAS         & \textbf{4.59}  & \textbf{5.74} & \textbf{263.69} & \textbf{4.84} & \textbf{5.37} & 283.09 \\
            & Time-uniform & 5.53  & 6.78 & 243.78  & 5.48 & 5.44 & \textbf{284.85} \\
            & LogSNR      & 4.95  & 6.93 & 251.08  & 4.98 & 5.47 & 280.89 \\
\bottomrule
\end{tabular}
\end{table*}

That is, we keep evaluating the model diagnostic at the \emph{true} SNR points $\gamma_{k-1}$, but we measure step
sizes on the \emph{regularized} axis via $\Delta\eta_k=\eta_k-\eta_{k-1}$, preventing near-terminal (high-SNR)
steps from dominating the optimization.

Once $\mathcal L_{x_0}(\gamma)$ is estimated on a finite candidate set of SNR values, the minimization in
\eqref{eq:schedule-objective-reg} becomes a minimum-cost selection of $K$ increasing grid points and can be solved
efficiently by a standard dynamic-programming shortest-path routine (Appendix ~\ref{app:schedule-optimization}). We call this schedule Loss-Adaptive Schedule (LAS).

\section{Experiments}
\label{sec:experiments}

We evaluate the proposed discretization schedule on both synthetic toy distributions and a large-scale image
generation benchmark. 

\subsection{Toy Examples: Gaussian Mixture Models}
We first consider controlled synthetic settings where the ground-truth data distribution is a Gaussian mixture
model (GMM). The corresponding
details are provided in Appendix~\ref{app:toy}.

\subsection{ImageNet $256\times256$ with Latent Diffusion}
We next evaluate on a real-world generative modeling task using latent diffusion models on ImageNet
$256\times256$ \cite{rombach2022high}. We use classifier guidance with scale $2$ and report Fr\'echet Inception
Distance (FID). We test two samplers:
(i) DDIM with $\eta=1$, which corresponds to the stochastic sampler consistent with our ``freezing-$m$''
discretization,
and (ii) SDE-DPM-Solver++(2M) and DPM-Solver++(2M)  second-order samplers \cite{lu2025dpm}.

For SDE-DPM-Solver++(2M) and DPM-Solver++(2M), we found the method to be sensitive to highly inhomogeneous step sizes due to its second-order
structure. In particular, second-order solvers are most stable when consecutive step sizes are \emph{comparable} (e.g., nearly
constant on the log-SNR axis): their local truncation error analysis and practical error cancellation across
adjacent steps can break down when the schedule is highly inhomogeneous, leading to instability and degraded
sample quality. To stabilize second-order sampling, we therefore encourage \emph{smooth} log-SNR step sizes.To stabilize the schedule for this sampler, we add an additional smoothness penalty to the schedule
optimization objective:
\begin{equation}
\alpha \sum_{k=2}^{K} (h_k - h_{k-1})^2,
\end{equation}
where $h_k := \log(\gamma_k/\gamma_{k-1})$ denotes the log-SNR ratio at step $k$.

\paragraph{Schedules and hyperparameters.}

We select $(\lambda,\alpha)$ using a small pilot budget of 1{,}000 generated samples and fix them thereafter.
All numbers reported in Table~\ref{tab:imagenet_metrics} are computed from an independent run of 50{,}000 generated
samples using the fixed hyperparameters. This mirrors standard practice for sampler hyperparameter tuning and
does not reuse the evaluation budget during selection.
Throughout all experiments we set
the SNR-axis regularization parameter to $\lambda=1.5$. Also,
we use $\alpha=12$ for all DPM-Solver experiments.

We compare the proposed LAS with three commonly used time-discretization schedules: Time-uniform, LogSNR, and EDM \cite{karras2022elucidating}.

Table~\ref{tab:imagenet_metrics} reports results on ImageNet $256\times256$ for number of function evaluations (NFE) 10 and 20.
Overall, LAS improves performance over the linear-time schedule for all three
samplers. The gains are particularly pronounced for the first-order method DDIM at low NFE (e.g., NFE$=10$),
which is consistent with the theory suggesting discretization effects are most visible in coarse
discretizations. We also observe improvements for SDE-DPM++(2M) and DPM-Solver++ (2M).

\section{Future work}

Our discretization bound is obtained by controlling the MMSE derivative via a general upper bound, and we expect this step is not tight in many regimes. In particular, under additional but still reasonable assumptions on the code distribution (e.g., separation properties of the code support), the MMSE regularity may admit sharper control, leading to improved constants or rates beyond the current $O(H^2/K)$ dependence. More broadly, we believe that exploiting local structure of the code distribution could yield sharper convergence guarantees.

On the scheduling side, LAS optimizes a regularized surrogate objective; the regularization we use is heuristic
and is motivated by the observation that the pathwise KL bound can become overly sensitive in the high-SNR (near-terminal) regime due to the data processing inequality used to relate pathwise KL to the terminal marginal.
A more refined analysis of this high-SNR data-processing gap may lead to a better-justified objective and further improvements to the resulting schedules. 
Finally, it would be valuable to extend our analysis and schedule design to higher-order samplers (e.g., second-order solvers such as DPM-Solver++), with the goal of deriving a principled schedule objective that accounts for higher-order discretization error and eliminates the need for the current heuristic step-size smoothing regularization.

\printbibliography

\newpage
\appendix
\onecolumn

\section{Proof of Proposition~\ref{prop:KL}}

We follow the proof of Theorem 10 in \cite{chen2023sampling}.

Define
\[
\mathcal E_t
:=\exp\!\left(-\int_0^t \delta_s^\top dB_s - \frac12\int_0^t \|\delta_s\|^2\,ds\right),\qquad t\in[0,T_\delta].
\]
Then $(\mathcal E_t)_{t\le T_\delta}$ is a nonnegative local $\mathbb P$-martingale.
Since global Novikov/Kazamaki may fail, we localize. Set
\[
\tau_n:=\inf\Bigl\{t\le T_\delta:\int_0^t \|\delta_s\|^2\,ds\ge n\Bigr\}\wedge T_\delta.
\]
Then $\int_0^{\tau_n}\|\delta_s\|^2ds\le n$ a.s., so Novikov holds on $[0,\tau_n]$ and
$\bigl(\mathcal E_{t\wedge\tau_n}\bigr)_{t\le T_\delta}$ is a true martingale; in particular,
\(
\mathbb E_{\mathbb P}[\mathcal E_{\tau_n}]=1.
\)

Define a probability measure $\tilde{\mathbb P}^n$ on $(\Omega,\mathcal F_{T_\delta})$ by
\[
\frac{d\tilde{\mathbb P}^n}{d\mathbb P}:=\mathcal E_{\tau_n}.
\]
By Girsanov's theorem applied to the stopped integrand $\delta\,\mathbf 1_{[0,\tau_n]}$, the process
\[
B^{(n)}_t:=B_t+\int_0^{t\wedge\tau_n}\delta_s\,ds
\]
is a $\tilde{\mathbb P}^n$-Brownian motion. Substituting $dB_t=dB^{(n)}_t-\delta_t\mathbf 1_{[0,\tau_n]}(t)\,dt$
into the reverse SDE yields that
\begin{equation*}%\label{eq:hybrid}
dY_t=\tilde\beta_t(Y_t)\,\mathbf 1_{[0,\tau_n]}(t)\,dt+\beta_t(Y_t)\,\mathbf 1_{(\tau_n,T_\delta]}(t)\,dt+dB^{(n)}_t.
\end{equation*}
In particular, up to time $\tau_n$ the drift is exactly $\tilde\beta$.

Since $\log\frac{d\tilde{\mathbb P}^n}{d\mathbb P}=\log \mathcal E_{\tau_n}$, we have
\[
\mathrm{KL}(\mathbb P\|\tilde{\mathbb P}^n)
=\mathbb E_{\mathbb P}\!\left[\log\frac{d\mathbb P}{d\tilde{\mathbb P}^n}\right]
=\mathbb E_{\mathbb P}\!\left[-\log \mathcal E_{\tau_n}\right]
=\mathbb E_{\mathbb P}\!\left[\int_0^{\tau_n}\delta_s^\top dB_s+\frac12\int_0^{\tau_n}\|\delta_s\|^2\,ds\right].
\]
The stochastic integral $\int_0^{\tau_n}\delta_s^\top dB_s$ is a (true) martingale with zero mean (it is stopped and square-integrable),
hence
\begin{equation*}%\label{eq:KLlocalized}
\mathrm{KL}(\mathbb P\|\tilde{\mathbb P}^n)
=\frac12\,\mathbb E_{\mathbb P}\!\left[\int_0^{\tau_n}\|\delta_s\|^2\,ds\right]
\le \frac12\,\mathbb E_{\mathbb P}\!\left[\int_0^{T_\delta}\|\delta_s\|^2\,ds\right].
\end{equation*}

We now consider a coupling of $(\tilde{\mathbb P}^n)_{n\in\mathbb N}$ and $\tilde{\mathbb P}$: stochastic processes $(\tilde \xi^n)_{n\in\mathbb N}$ and $\tilde \xi$ on $[0,T_\delta]$ driven by a single Brownian motion $\bar B$ such that
\begin{equation*}
d\tilde\xi^n_t=\tilde\beta_t(\tilde\xi^n_t)\,\mathbf 1_{[0,\tau_n]}(t)\,dt+\beta_t(\tilde\xi^n_t)\,\mathbf 1_{(\tau_n,T_\delta]}(t)\,dt+d\bar B_t,
\end{equation*}
and
\begin{equation*}
d\tilde\xi_t = \tilde\beta_t(\tilde\xi_t)\,dt + d\bar B_t,
\end{equation*}
with $\tilde\xi^n_0=\tilde\xi_0\stackrel{d}{=} X_{T}$ a.s. for all $n$.

Note that the distribution of $\tilde \xi^n$ is $\tilde{\mathbb P}^n$ for all $n$ and the distribution of $\tilde \xi$ is $\tilde{\mathbb P}$. Also, for any $n$, we have $\tilde\xi^n_t=\tilde\xi_t$ a.s. for every $t\in[0,\tau_n]$.

Fix $\varepsilon\in(0,T_\delta)$ and consider the truncation map on path space
\[
\pi_\varepsilon:C([0,T_\delta];\mathbb R^d)\to C([0,T_\delta];\mathbb R^d),
\qquad
(\pi_\varepsilon(\omega))(t):=\omega\bigl(t\wedge (T_\delta-\varepsilon)\bigr).
\]

Since the square-integrability condition \eqref{eq:finite-energy-under-P}
implies $\tau_n\uparrow T_\delta$ $\mathbb{P}$-a.s., it follows that $\pi_\varepsilon(\tilde\xi^n)\to\pi_\varepsilon(\tilde\xi)$ almost surely,
uniformly on $[0,T_\delta]$. Hence
\[
(\pi_\varepsilon)_\#\tilde{\mathbb P}^n \Rightarrow (\pi_\varepsilon)_\#\tilde{\mathbb P}
\qquad\text{weakly on }C([0,T_\delta];\mathbb R^d).
\]
Therefore, by the lower semicontinuity of $\mathrm{KL}$ under weak convergence and the data-processing inequality,
\begin{align*}
\mathrm{KL}\bigl((\pi_\varepsilon)_\#\mathbb P\,\big\|\,(\pi_\varepsilon)_\#\tilde{\mathbb P}\bigr)
&\le \liminf_{n\to\infty}\mathrm{KL}\bigl((\pi_\varepsilon)_\#\mathbb P\,\big\|\,(\pi_\varepsilon)_\#\tilde{\mathbb P}^n\bigr) \\
&\le \liminf_{n\to\infty}\mathrm{KL}\bigl(\mathbb P\,\big\|\,\tilde{\mathbb P}^n\bigr)
\le \frac12\,\mathbb E_{\mathbb P}\!\left[\int_0^{T_\delta}\|\delta_s\|^2\,ds\right].
\end{align*}
Finally, since $\pi_\varepsilon(\omega)\to\omega$ uniformly as $\varepsilon\downarrow 0$,
\[
\mathrm{KL}(\mathbb P\|\tilde{\mathbb P})
=\lim_{\varepsilon\downarrow 0}\mathrm{KL}\bigl((\pi_\varepsilon)_\#\mathbb P\,\big\|\,(\pi_\varepsilon)_\#\tilde{\mathbb P}\bigr)
\le \frac12\,\mathbb E_{\mathbb P}\!\left[\int_0^{T_\delta}\|\delta_s\|^2\,ds\right].
\]

\section{Proof of Proposition~\ref{thm:disc-mmse}}

\begin{proof}
The forward process is $X_t = Z + W_t$ for $t \in [0, T]$, where $Z \sim p$ is independent of the standard Brownian motion $(W_t)_{t \geq 0}$. Parameterize observations by the signal-to-noise ratio $\gamma = 1/t > 0$, so the observation at SNR $\gamma$ is $X_{1/\gamma} = Z + W_{1/\gamma}$.

Define the increasing filtration $(\mathcal{F}_\gamma)_{\gamma > 0}$ by
\[
\mathcal{F}_\gamma := \sigma\bigl( X_{1/\gamma'} : 0 < \gamma' \leq \gamma \bigr)
\]
(equivalently, the sigma-algebra generated by $\{W_u : u \geq 1/\gamma\}$). Higher $\gamma$ corresponds to lower noise variance $1/\gamma$, so the filtration is increasing: $\gamma_1 < \gamma_2$ implies $\mathcal{F}_{\gamma_1} \subset \mathcal{F}_{\gamma_2}$.

Let
\[
M_\gamma := \mathbb{E}[Z \mid \mathcal{F}_\gamma] = \mathbb{E}\bigl[ Z \mid X_{1/\gamma} \bigr] = m_{1/\gamma}(X_{1/\gamma})
\]
be the posterior mean (Bayes-optimal denoiser) at SNR $\gamma$.

By the tower property, for $0 < \gamma_1 < \gamma_2$,
\[
M_{\gamma_1} = \mathbb{E}\bigl[ Z \mid \mathcal{F}_{\gamma_1} \bigr] = \mathbb{E} \bigl[ \mathbb{E}[Z \mid \mathcal{F}_{\gamma_2}] \bigm| \mathcal{F}_{\gamma_1} \bigr] = \mathbb{E}[ M_{\gamma_2} \mid \mathcal{F}_{\gamma_1} ].
\]
Thus, $(M_\gamma)_{\gamma > 0}$ is a vector-valued martingale with respect to $(\mathcal{F}_\gamma)$---the Doob martingale associated with the integrable target $Z$.

The MMSE at SNR $\gamma$ is
\[
\mathrm{mmse}(\gamma) := \mathbb{E} \bigl[ \|Z - M_\gamma\|_2^2 \bigr].
\]
Since $\mathcal{F}_{\gamma_{k-1}} \subset \mathcal{F}_\gamma$ for $\gamma > \gamma_{k-1}$, the corresponding subspaces of $\mathcal{F}_\gamma$-measurable random vectors are nested. The error $Z - M_\gamma$ is orthogonal (in $L^2(\mathbb{P})$) to all $\mathcal{F}_\gamma$-measurable functions, and in particular to the martingale increment $M_\gamma - M_{\gamma_{k-1}}$.

Decompose
\[
Z - M_{\gamma_{k-1}} = (Z - M_\gamma) + (M_\gamma - M_{\gamma_{k-1}}).
\]
The cross-term vanishes:
\[
\mathbb{E} \bigl[ (Z - M_\gamma)^\top (M_\gamma - M_{\gamma_{k-1}}) \bigr] = 0,
\]
so by the Pythagorean theorem,
\[
\mathbb{E} \bigl[ \|Z - M_{\gamma_{k-1}}\|_2^2 \bigr] = \mathbb{E} \bigl[ \|Z - M_\gamma\|_2^2 \bigr] + \mathbb{E} \bigl[ \|M_\gamma - M_{\gamma_{k-1}}\|_2^2 \bigr].
\]
Hence,
\begin{equation}\label{eq:mmsedif}
\mathbb{E} \bigl[ \|M_\gamma - M_{\gamma_{k-1}}\|_2^2 \bigr] = \mathrm{mmse}(\gamma_{k-1}) - \mathrm{mmse}(\gamma). 
\end{equation}

Now express $\mathcal{E}_{\mathrm{disc}}$. By definition,
\[
\mathcal{E}_{\mathrm{disc}} = \sum_{k=1}^K \mathbb{E} \left[ \int_{s_{k-1}}^{s_k} \| m_{T-s}(Y_s) - m_{T-s_{k-1}}(Y_{s_{k-1}}) \|_2^2 \frac{ds}{(T-s)^2} \right].
\]
The change of variables $\gamma = 1/(T-s)$ gives $d\gamma = ds / (T-s)^2$, and as $s$ runs from $s_{k-1}$ to $s_k$, $\gamma$ runs from $\gamma_{k-1}$ to $\gamma_k$ (increasing). Since we also have $(Y_s)_{s\in[0,T_\delta]} = (X_{T-s})_{s\in[0,T_\delta]}$\,, each summand becomes
\[
\mathbb{E} \left[ \int_{s_{k-1}}^{s_k} \| m_{T-s}(Y_s) - m_{T-s_{k-1}}(Y_{s_{k-1}}) \|_2^2 \frac{ds}{(T-s)^2} \right] = \mathbb{E} \left[\int_{\gamma_{k-1}}^{\gamma_k} \| M_\gamma - M_{\gamma_{k-1}} \|_2^2 \, d\gamma \right].
\]
Interchanging the order of expectation and integration (justified by Tonelli's theorem),
\[
\mathbb{E} \left[ \int_{\gamma_{k-1}}^{\gamma_k} \| M_\gamma - M_{\gamma_{k-1}} \|_2^2 \, d\gamma \right] = \int_{\gamma_{k-1}}^{\gamma_k} \mathbb{E} \bigl[ \| M_\gamma - M_{\gamma_{k-1}} \|_2^2 \bigr] \, d\gamma.
\]
Applying \eqref{eq:mmsedif}, this becomes 
\[
\int_{\gamma_{k-1}}^{\gamma_k} \bigl( \mathrm{mmse}(\gamma_{k-1}) - \mathrm{mmse}(\gamma) \bigr) \, d\gamma.
\]
Summing over $k = 1, \dots, K$ yields the desired expression for $\mathcal{E}_{\mathrm{disc}}$.

This expresses the discretization error as the total expected quadratic variation of the missed martingale increments when the denoiser is held constant within each reverse step, instead of following the continuous Doob martingale $M_\gamma$.
\end{proof}

\section{Proof of Theorem~\ref{thm:deriv_mmse_bound}}
We now show how the derivative bound on the MMSE in Theorem~\ref{thm:deriv_mmse_bound} is obtained. The argument proceeds in three steps. First, we express the derivative of the MMSE in terms of a conditional covariance. Second, we control the trace of the squared covariance by a fourth-moment quantity. Finally, we bound this fourth moment using Shannon entropy of the target distribution under Assumption~\ref{assumption}.

% Let \(Z\) be a discrete random variable taking values in a countable set \(\mathcal C\subset\R^d\) with probability mass function \(p(z)\). Recall the information content and entropy
% \[
% \info(z):=\log\frac{1}{p(z)},\qquad H:=\mathbb E[\info(Z)].
% \]

%Let \(N\sim\mathcal N(0,I_d)\) be independent of \(Z\). For \(t>0\) put \(X_t = Z + \sqrt{t}\,N\), so that

Fix $t\in[0,T]$. According to the forward process, the joint law of $(Z,X_t)$ is 
\begin{equation*}
\mathbb{P}\big(Z=z,\ X_t\in dx\big)
=
p(z)\,p_t(x\mid z)\,dx,
\qquad z\in\mathcal C,
%\label{eq:joint-law-triple}
\end{equation*}
where $p_{t}(x\mid z)$ is the probability density function of Gaussian distribution 
\(\mathcal{N}(z,tI_d)\). We compute the posterior 
\begin{equation}\label{eq:post}
    p^{\mathrm{post}}_{t}(z\mid x)=\frac{p(z)p_{t}(x\mid z)}{p_{X_t}(x)}\,,
\end{equation}
where $p_{X_t}(x) = \sum_{u\in\mathcal{C}}p(u)p_{t}(x\mid u)$ is the probability density function of $X_t$.

Introduce a new random variable $Z'$. We specify the joint law of $(Z,X_t,Z')$ by 
\begin{equation*}
\mathbb{P}\big(Z=z,\ X_t\in dx,\ Z'=z'\big)
=
p(z)\,p_t(x\mid z)\,r_t(z'\mid x,z)\,dx,
\qquad z,z'\in\mathcal C.
%\label{eq:joint-law-triple}
\end{equation*}
where \(r_{t}(z'\mid x,z) = p^{\mathrm{post}}_{t}(z'\mid x)\). 

Note that conditional on $X_t$, the random variable $Z'$ can be considered as a posterior draw independent of $Z$. Indeed,
\begin{align*}
%\mathbb{P}(Z=z,\;Z'=z'\mid X_t\in dx)
p_{Z,Z'\mid X_t}(z,z'\mid x)
&=
\frac{p(z)\,p_t(x\mid z)\,r_t(z'\mid x,z)}{p_{X_t}(x)}\\%{\mathbb{P}(X_t\in dx)} 
&=
\frac{p(z)\,p_t(x\mid z)}{p_{X_t}(x)}\, p^{\mathrm{post}}_{t}(z'\mid x) \\
&=
p^{\mathrm{post}}_{t}(z\mid x)\,p^{\mathrm{post}}_{t}(z'\mid x).
\end{align*}

Define the kernel %$q_{t,z}(\cdot)$ on $\mathcal C$ by 
\begin{equation*}
q_{t,z}(z')
:= \mathbb{P}(Z'=z'\mid Z=z)
=\int_{\R^d} p^{\mathrm{post}}_t(z'\mid x)\, p_t(x\mid z)\,dx,
\qquad z,z'\in\mathcal C.
%\label{eq:kernel-def}
\end{equation*}

%We first introduce some notations. 
%For each \(z^*\in\mathcal C\) define the kernel
%\[
%q_{t,z^*}(z) := \int p_{t}(z\mid x)\,p_{t}(x\mid z^*)\,dx, \qquad z\in\mathcal C,
%\]

The proof for Theorem~\ref{thm:deriv_mmse_bound} starts from a recent identity from information theory, which relates the derivative of the MMSE along the Gaussian channel to the conditional covariance of the posterior. The following proposition is a reformulation of a result from \cite{nguyen2024beyond}, but we give an alternative proof via martingale representation.

\begin{proposition}
\label{prop:mmse-deriv-cov}
% {Here we rephrase the result in the beautiful information theory paper \cite{nguyen2024beyond}, which gives the formula for $\mathrm{mmse}'(\gamma)$ as trace of Covariance squared. Check!}

The MMSE function
\[
\mathrm{mmse}(\gamma)=\mathbb E\big[\|Z-\mathbb E[Z\mid X_{1/\gamma}]\|_2^2\big]
\]
is differentiable and monotonically decreasing for $\gamma>0$, and
\begin{equation}
|\mathrm{mmse}'(\gamma)|
=-\mathrm{mmse}'(\gamma)
=\mathbb E\!\left[
\operatorname{tr}\!\left(\operatorname{Cov}(Z\mid X_{1/\gamma})^2\right)
\right]
=\mathbb E\!\left[
\operatorname{tr}\!\left(\operatorname{Cov}(Z\mid X_t)^2\right)
\right],
\label{eq:mmse-deriv-cov}
\end{equation}
where $t=1/\gamma$.
\end{proposition}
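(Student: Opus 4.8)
The plan is to prove \eqref{eq:mmse-deriv-cov} by combining two classical ingredients: the I-MMSE–type differentiation of the MMSE along the Gaussian channel, and a martingale/It\^o representation of the Doob martingale $M_\gamma = \E[Z\mid X_{1/\gamma}]$ established in Proposition~\ref{thm:disc-mmse}. First I would recall from the proof of Proposition~\ref{thm:disc-mmse} that $(M_\gamma)_{\gamma>0}$ is an $L^2$-martingale with respect to the SNR filtration $(\mathcal F_\gamma)$, so that $\mathrm{mmse}(\gamma_1)-\mathrm{mmse}(\gamma_2)=\E[\|M_{\gamma_2}-M_{\gamma_1}\|_2^2]$ for $\gamma_1<\gamma_2$; this already gives monotonicity and identifies $-\mathrm{mmse}'(\gamma)$ (where it exists) with the ``instantaneous quadratic variation rate'' $\frac{d}{d\gamma}\E[\langle M\rangle_\gamma]$. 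The task then reduces to computing this rate and showing it equals $\E[\operatorname{tr}(\Cov(Z\mid X_t)^2)]$ with $t=1/\gamma$.

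The key computation is to identify the predictable quadratic variation of $M_\gamma$. Working in the time variable $t$ (so $\gamma=1/t$ runs backward), one writes $X_t=Z+W_t$ and uses that $M_t:=m_t(X_t)=\E[Z\mid X_t]$ satisfies, by It\^o's formula applied to the (time-reversed) filtration, a representation $dM_t = -\Cov(Z\mid X_t)\, t^{-1} dW_t + (\text{drift})$; more precisely, the standard heat-equation/Tweedie identities give $\nabla m_t(x) = \frac{1}{t}\Cov(Z\mid X_t=x)$, and the martingale part of $m_t(X_t)$ has diffusion coefficient $\nabla m_t(X_t)$ against $dW_t$. Converting to the SNR clock via $d\gamma = -t^{-2}dt$, the increment of the quadratic variation per unit $\gamma$ becomes $\|\nabla m_t(X_t)\|_{\mathrm{HS}}^2 \cdot t^2 = \operatorname{tr}\!\big((\tfrac1t\Cov(Z\mid X_t))^2\big)\cdot t^2 = \operatorname{tr}(\Cov(Z\mid X_t)^2)$, which upon taking expectations yields \eqref{eq:mmse-deriv-cov}. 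Differentiability of $\gamma\mapsto\mathrm{mmse}(\gamma)$ follows because this expression is a well-defined, locally integrable function of $\gamma$ (finite by Assumption~\ref{assumption0}, since $\operatorname{tr}\Cov(Z\mid X_t)^2 \le (\operatorname{tr}\Cov(Z\mid X_t))^2 \le (\E[\|Z\|^2\mid X_t])^2$ and one can bound its integral), so the fundamental theorem of calculus applies to the absolutely continuous function $\gamma\mapsto\mathrm{mmse}(\gamma)$.

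The main obstacle I anticipate is making the It\^o/martingale representation of $M_\gamma$ rigorous: one must justify that $m_t(X_t)$ is a semimartingale with the claimed diffusion coefficient, handle the explosion of the $1/t$ factor as $t\downarrow0$ (i.e. $\gamma\to\infty$), and verify the integrability needed to move the derivative inside the expectation. A clean way around the heaviest stochastic-calculus machinery is to prove the identity first for $\gamma$ restricted to a compact subinterval of $(0,\infty)$ where $p_t$ is smooth with bounded derivatives (Gaussian smoothing guarantees this for each fixed $t>0$), establish $\frac{d}{d\gamma}\mathrm{mmse}(\gamma) = -\E[\operatorname{tr}(\Cov(Z\mid X_t)^2)]$ there by differentiating under the integral sign using the explicit posterior \eqref{eq:post} and the heat-equation identity for $\partial_t m_t$, and then extend to all $\gamma>0$ by monotone/continuity arguments. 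The alternative is simply to cite \cite{nguyen2024beyond} for the identity and present the martingale argument as the promised ``alternative proof,'' in which case the focus is entirely on the quadratic-variation computation above and the obstacle is just the integrability bookkeeping, which Assumption~\ref{assumption0} handles.
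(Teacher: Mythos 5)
Your proposal takes essentially the same route as the paper: cast $M_\gamma = \E[Z\mid X_{1/\gamma}]$ as a Doob martingale, identify $-\mathrm{mmse}'$ with the expected quadratic-variation rate of $M$, plug in the Tweedie Jacobian identity $\nabla_y m_t(y) = \tfrac{1}{t}\Cov(Z\mid X_t=y)$, and change clocks from $t$ to $\gamma$. The one refinement the paper makes to dispose of the bookkeeping you flag is to work directly with the reverse-time process $Y_s = X_{T-s}$ and its own Brownian driver $B_s$, so that $M_s = \E[Y_T\mid\mathcal F_s]$ is a genuine forward martingale and It\^o's formula yields $dM_s = \nabla_y u(s,Y_s)\,dB_s$ with no drift term at all, rather than decomposing $m_t(X_t)$ against the forward $W_t$ and the time-reversed filtration.
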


\begin{proof}

Recall we defined the reverse-time process $(Y_s)_{s\in[0,T]}$ by $Y_s := X_{T-s}$ for all $s\in[0,T]$. Let $\mathcal F_s:=\sigma(Y_u:0\le u\le s)$ be the natural filtration of $(Y_s)_{s\in[0,T]}$. Define 
\[
M_s:=\mathbb E[Y_T\mid \mathcal F_s].
\]

Since $Y$ is Markov, $\mathbb E[Y_T\mid \mathcal F_s]=\mathbb E[Y_T\mid Y_s]$, hence
$M_s=u(s,Y_s)$ where $u(s,y):=\mathbb E[Y_T\mid Y_s=y]$.
Since $(M_s)_{s\in[0,T]}$ is a square-integrable
continuous martingale, and moreover, the diffusion coefficient of the reverse SDE is the identity,
we have
\[
dM_s = \nabla_y u(s,Y_s)\, d B_s
\]
by It\^o's formula. 
Since $M_s$ is a continuous martingale, It\^o's formula gives
\[
d\|M_s\|_2^2 = 2\langle M_s, dM_s\rangle + d\langle M\rangle_s,
\]
and taking expectation kills the martingale term, hence
\begin{equation*}
\frac{d}{ds}\mathbb E\|M_s\|_2^2
=\mathbb E\!\left[\frac{d}{ds}\langle M\rangle_s\right]
=\mathbb E\!\left[\|\nabla_y u(s,Y_s)\|_F^2\right],
%\label{eq:EdM2}
\end{equation*}
where $\|\cdot\|_F$ is the Frobenius norm. 

Define
\[
\mathrm{mmse}_{\rm rev}(s):=\mathbb E\big[\|Y_T-\mathbb E[Y_T\mid \mathcal F_s]\|_2^2\big]
=\mathbb E\big[\|Y_T-M_s\|_2^2\big].
\]
Using orthogonality of conditional expectation,
\[
\mathrm{mmse}_{\rm rev}(s)=\mathbb E\|Y_T\|_2^2-\mathbb E\|M_s\|_2^2.
\]
Differentiating in $s$ gives
\begin{equation}
\mathrm{mmse}_{\rm rev}'(s)
=-\mathbb E\!\left[\|\nabla_y u(s,Y_s)\|_F^2\right].
\label{eq:mmse-rev-deriv}
\end{equation}

Recall that $Y_s=X_{T-s}$ and $Y_T=Z$. Thus, $u(s,y)=\mathbb E[Z\mid X_{T-s}=y]$. Denote $m_t(y):=\mathbb E[Z\mid X_t=y]$ and
$\Sigma_t(y):=\operatorname{Cov}(Z\mid X_t=y)$.
A standard differentiation-under-the-integral calculation for the Gaussian channel
(see e.g.\ Tweedie-type identities) yields the matrix Jacobian identity
\begin{equation}
\nabla_y m_t(y)=\frac{1}{t}\,\Sigma_t(y).
\label{eq:jacobian-cov}
\end{equation}
(Quick derivation: $m_t(y)=\frac{\int z\,p_Z(z)\varphi_t(y-z)\,dz}{\int p_Z(z)\varphi_t(y-z)\,dz}$,
differentiate using $\nabla_y\varphi_t(y-z)=-(y-z)\varphi_t(y-z)/t$, and simplify to obtain
$\nabla_y m_t(y)=\frac{1}{t}\big(\mathbb E[ZZ^\top\!\mid X_t=y]-m_t(y)m_t(y)^\top\big)$.)

Combining \eqref{eq:mmse-rev-deriv}--\eqref{eq:jacobian-cov} and using
$\|\Sigma\|_F^2=\operatorname{tr}(\Sigma^2)$ for symmetric $\Sigma$,
\[
\mathrm{mmse}_{\rm rev}'(s)
=-\mathbb E\!\left[\left\|\frac{1}{t}\Sigma_t(X_t)\right\|_F^2\right]
=-\frac{1}{t^2}\mathbb E\!\left[\operatorname{tr}\!\big(\operatorname{Cov}(Z\mid X_t)^2\big)\right],
\]
where $t=T-s$.

% Define
% \[
% \mathrm{mmse}_{\rm fwd}(t):=\mathbb E\big[\|Z-\mathbb E[Z\mid X_t]\|_2^2\big].
% \]

% Note that $\mathrm{mmse}_{\rm rev}(s)=\mathrm{mmse}_{\rm fwd}(T-s)$ by construction of the joint laws,
% so with $t=T-s$ we have $\mathrm{mmse}_{\rm fwd}'(t)=\frac{1}{t^2}\mathbb E[\operatorname{tr}(\operatorname{Cov}(Z\mid X_t)^2)]$,
% which is 
% \begin{equation*}
% \mathrm{mmse}_{\rm fwd}'(t)=\frac{1}{t^2}\,
% \mathbb E\!\left[\operatorname{tr}\!\Big(\operatorname{Cov}(Z\mid X_t)^2\Big)\right].
% %\label{eq:mmse-deriv-t}
% \end{equation*}

Note that since $Y_s=X_t$ and $Y_T=Z$, we have $\mathrm{mmse}(\gamma)=\mathrm{mmse}_{\rm rev}(s)$, where $\gamma=1/t=1/(T-s)$. By the chain rule,
\[
\mathrm{mmse}'(\gamma)=\mathrm{mmse}_{\rm rev}'(s)\cdot\frac{ds}{d\gamma}
= \left(-\gamma^2\,\mathbb E\!\left[\operatorname{tr}\!\big(\operatorname{Cov}(Z\mid X_{1/\gamma})^2\big)\right]\right)\cdot\left(\frac{1}{\gamma^2}\right),
\]
hence \eqref{eq:mmse-deriv-cov}. 
In particular, $\mathrm{mmse}'(\gamma)\le 0$ and
$\mathrm{mmse}$ is monotonically decreasing.
\end{proof}

Identity \eqref{eq:mmse-deriv-cov} reduces the problem of bounding $\mathrm{mmse}'(\gamma)$ to controlling the squared conditional covariance of the posterior distribution of $Z$ given a noisy observation.

To control the right-hand side of \eqref{eq:mmse-deriv-cov}, we bound the trace of the squared covariance matrix by a fourth moment using the following probabilistic lemma.

\begin{lemma}\label{lem:cov2-4th}
Let \(Y\) be an \(\R^d\)-valued random vector with mean \(m:=\mathbb E[Y]\) and covariance \(\Sigma:=\operatorname{Cov}(Y)\). Then for any \(a\in\R^d\),
\begin{equation*}\label{eq:cov2-4th-lemma}
\operatorname{tr}(\Sigma^2) \le \mathbb E\big[\|Y-a\|_2^4\big].
\end{equation*}
\end{lemma}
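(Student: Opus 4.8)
The plan is to reduce immediately to the case $a=0$ and then split the bound into two elementary steps. Since the covariance of a random vector is invariant under translation, I will set $W:=Y-a$, so that $\operatorname{Cov}(W)=\Sigma$ and the claimed inequality becomes $\operatorname{tr}(\Sigma^2)\le \E[\|W\|_2^4]$. It therefore suffices to prove the following translation-free statement: for any $\R^d$-valued $W$ with $\E\|W\|_2^4<\infty$, writing $A:=\E[WW^\top]$ for the (uncentred) second-moment matrix, one has $\operatorname{tr}(\operatorname{Cov}(W)^2)\le \operatorname{tr}(A^2)\le \E[\|W\|_2^4]$. The key conceptual point of this reduction is that one should \emph{not} try to recenter $Y$ at its mean: the mean minimizes the quadratic objective $\E\|Y-\cdot\|_2^2$ but not the quartic one, so the argument must instead compare $\operatorname{Cov}(W)$ against the uncentred matrix $A$.

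For the first inequality I will use a Loewner-order comparison. Since $\operatorname{Cov}(W)=A-\E[W]\E[W]^\top\preceq A$ and both $\operatorname{Cov}(W)$ and $A$ are positive semidefinite, it is enough to know that $0\preceq B\preceq C$ implies $\operatorname{tr}(B^2)\le\operatorname{tr}(C^2)$; this follows from $\operatorname{tr}(B(C-B))\ge 0$ and $\operatorname{tr}(C(C-B))\ge 0$, i.e. from the fact that the trace of a product of two PSD matrices is nonnegative. Applying this with $B=\operatorname{Cov}(W)$ and $C=A$ gives $\operatorname{tr}(\operatorname{Cov}(W)^2)\le\operatorname{tr}(A^2)$. (An equivalent route, avoiding the order lemma, is to expand directly: with $\mu:=\E[W]$ one has $\operatorname{tr}(A^2)-\operatorname{tr}(\operatorname{Cov}(W)^2)=2\mu^\top A\mu-\|\mu\|_2^4$, and $\mu^\top A\mu=\E[(\mu^\top W)^2]\ge(\mu^\top\mu)^2=\|\mu\|_2^4$ by Jensen, so the difference is $\ge\|\mu\|_2^4\ge 0$.)

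For the second inequality I will invoke Jensen: $\operatorname{tr}(A^2)=\|A\|_F^2=\|\E[WW^\top]\|_F^2\le\E[\|WW^\top\|_F^2]$ by convexity of $M\mapsto\|M\|_F^2$, and since $WW^\top$ is rank one with single nonzero eigenvalue $\|W\|_2^2$ we get $\|WW^\top\|_F^2=(W^\top W)^2=\|W\|_2^4$. Chaining the two steps and undoing the substitution yields $\operatorname{tr}(\Sigma^2)\le\E[\|W\|_2^4]=\E[\|Y-a\|_2^4]$. There is essentially no serious obstacle in this argument: the only subtlety is the choice of centering described above, and once one passes to $W=Y-a$ and compares with the uncentred second-moment matrix, both remaining steps are textbook (PSD trace monotonicity and Jensen).
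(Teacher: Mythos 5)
Your proof is correct, but it takes a genuinely different route from the paper's. The paper stays entirely at the scalar level: it uses the eigenvalue bound $\operatorname{tr}(\Sigma^2)\le(\operatorname{tr}\Sigma)^2$ for PSD matrices, the identity $\operatorname{tr}\Sigma=\E\|Y-m\|_2^2$, the fact that $m$ minimizes $a\mapsto\E\|Y-a\|_2^2$ to get $(\operatorname{tr}\Sigma)^2\le(\E\|Y-a\|_2^2)^2$, and finally the scalar Cauchy--Schwarz inequality $(\E\|Y-a\|_2^2)^2\le\E\|Y-a\|_2^4$. Your argument instead works at the matrix level: you translate to $W=Y-a$, use the Loewner-order comparison $\operatorname{Cov}(W)\preceq\E[WW^\top]$ together with monotonicity of $B\mapsto\operatorname{tr}(B^2)$ on the PSD cone, and then apply Jensen for the Frobenius norm to get $\|\E[WW^\top]\|_F^2\le\E\|W\|_2^4$. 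Both are clean; the paper's is slightly more elementary (no Loewner-order lemma, no matrix Jensen), while yours makes the operator-theoretic structure more visible. One correction to your framing: your claim that ``one should not try to recenter $Y$ at its mean'' because the mean does not minimize the quartic is not an actual obstruction, and in fact the paper does recenter at $m$. The trick is to pass from $\operatorname{tr}(\Sigma^2)$ to $(\operatorname{tr}\Sigma)^2=(\E\|Y-m\|_2^2)^2$ first, use the mean's optimality \emph{at the quadratic level} to replace $m$ by $a$, and only then upgrade from squared-second-moment to fourth moment via Cauchy--Schwarz. So the mean-centering route is perfectly viable; your alternative simply avoids it.
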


\begin{proof}
Since the covariance matrix $\Sigma$ is symmetric 
positive semidefinite, its eigenvalues $\lambda_1,\dots,\lambda_d$ 
are all nonnegative. This implies
\[
\operatorname{tr}(\Sigma^2)=\sum_{i=1}^d \lambda_i^2 
\;\le\; \left(\sum_{i=1}^d \lambda_i\right)^2 
= (\operatorname{tr}\Sigma)^2.
\]
Then we evaluate
\[
\operatorname{tr}\Sigma 
= \operatorname{tr}\bigl(\mathbb{E}[(Y-m)(Y-m)^\top]\bigr)
= \mathbb{E}\,\operatorname{tr}\bigl((Y-m)(Y-m)^\top\bigr)
= \mathbb{E}\|Y-m\|_2^2,
\]
where we used linearity of trace and expectation, and the identity 
$\operatorname{tr}(vv^\top)=\|v\|_2^2$ for any vector $v$.

Since $m=\mathbb{E}Y$ is the unique minimizer of 
the function $a\mapsto \mathbb{E}\|Y-a\|_2^2$, 
\[
\mathbb{E}\|Y-m\|_2^2 
\;\le\; \mathbb{E}\|Y-a\|_2^2
\qquad\text{for every } a\in\mathbb{R}^d.
\]

Combining these and finally applying the Cauchy--Schwarz 
inequality yields for every $a\in\mathbb{R}^d$, 
\[
\operatorname{tr}(\Sigma^2)
\;\le\;
(\operatorname{tr}\Sigma)^2
\;=\;
\bigl(\mathbb{E}\|Y-m\|_2^2\bigr)^2
\;\le\;
\bigl(\mathbb{E}\|Y-a\|_2^2\bigr)^2
\;\le\;
\mathbb{E}\|Y-a\|_2^4.
\]
\end{proof}

Applying Lemma~\ref{lem:cov2-4th} yields the following bound for the trace of the squared covariance matrix.

\begin{proposition}
\label{prop:cond-typical}
%Let \(Z^* \sim p(\cdot)\) and, conditional on \(Z^* = z^*\), let \(X_t\) have density \(p_t(x \mid z^*)\).
%Given \(X_t = x\), draw \(Z'\) from the posterior distribution \(p_t(\cdot \mid x)\), i.e.,
%\[
%\mathbb{P}(Z' = z \mid X_t = x) = p_t(z \mid x), \qquad z \in \mathcal{C}.
%\]

For each \(z \in \mathcal{C}\),
\[
\mathbb{E}\bigl[ \operatorname{tr}\bigl( \Cov(Z' \mid X_t)^2 \bigr) \mid Z = z \bigr]
\le \mathbb{E}_{Z' \sim q_{t,z}} \bigl[ \|Z' - z\|_2^4 \bigr],
\]
where \(q_{t,z}(z') := \int p^{\mathrm{post}}_t(z' \mid x)\, p_t(x \mid z)\, dx\).

Moreover, 
\[
\mathbb{E}\bigl[ \operatorname{tr}\bigl( \Cov(Z' \mid X_t)^2 \bigr) \bigr]
\le \mathbb{E}\bigl[ \|Z' - Z^*\|_2^4 \bigr].
\]
\end{proposition}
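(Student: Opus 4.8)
The plan is to derive both inequalities directly from Lemma~\ref{lem:cov2-4th}, applied conditionally on the noisy observation $X_t$, with the free center vector in the lemma taken to be a data point rather than the posterior mean; the conditional-independence structure of the triple $(Z,X_t,Z')$ established above is exactly what lets the conditioning be integrated out cleanly.

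First I fix $z\in\mathcal C$ and condition on the event $\{Z=z\}$. Under the joint law constructed above, given $Z=z$ the observation $X_t$ has density $p_t(\cdot\mid z)$, and---because $Z$ and $Z'$ are conditionally independent given $X_t$, i.e.\ $p_{Z,Z'\mid X_t}(z,z'\mid x)=p^{\mathrm{post}}_t(z\mid x)\,p^{\mathrm{post}}_t(z'\mid x)$---the conditional law of $Z'$ given $X_t=x$ is the posterior $p^{\mathrm{post}}_t(\cdot\mid x)$, \emph{whether or not} we also condition on $Z=z$. In particular $\Cov(Z'\mid X_t)$ is a deterministic function of $X_t$ alone. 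Now, for each fixed $x$, I apply Lemma~\ref{lem:cov2-4th} to the random vector with law $p^{\mathrm{post}}_t(\cdot\mid x)$ and center $a=z$, obtaining the pointwise bound
\[
\operatorname{tr}\!\bigl(\Cov(Z'\mid X_t=x)^2\bigr)\;\le\;\mathbb{E}\bigl[\|Z'-z\|_2^4\bigm|X_t=x\bigr].
\]
Integrating this against $p_t(x\mid z)\,dx$ and applying the tower property (valid since, given $Z=z$, the inner conditional law of $Z'$ is still $p^{\mathrm{post}}_t(\cdot\mid x)$) yields
\[
\mathbb{E}\bigl[\operatorname{tr}(\Cov(Z'\mid X_t)^2)\bigm|Z=z\bigr]\;\le\;\mathbb{E}\bigl[\|Z'-z\|_2^4\bigm|Z=z\bigr]\;=\;\mathbb{E}_{Z'\sim q_{t,z}}\bigl[\|Z'-z\|_2^4\bigr],
\]
the last equality being the definition of $q_{t,z}$ as the conditional law of $Z'$ given $Z=z$. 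This is the first assertion.

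For the second assertion I take $\mathbb{E}_{Z\sim p}$ of the first inequality. The left-hand side $\operatorname{tr}(\Cov(Z'\mid X_t)^2)$ is $X_t$-measurable, so $\mathbb{E}_{Z\sim p}\bigl[\mathbb{E}[\,\cdot\mid Z]\bigr]=\mathbb{E}[\,\cdot\,]$; on the right-hand side, averaging $\mathbb{E}_{Z'\sim q_{t,z}}[\|Z'-z\|_2^4]$ over $z\sim p$ reproduces $\mathbb{E}[\|Z'-Z^*\|_2^4]$, where $Z^*$ denotes the underlying data sample $Z\sim p$ of the joint construction. This gives $\mathbb{E}[\operatorname{tr}(\Cov(Z'\mid X_t)^2)]\le\mathbb{E}[\|Z'-Z^*\|_2^4]$.

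I do not anticipate a real obstacle: the statement is essentially a repackaging of Lemma~\ref{lem:cov2-4th}. The two points that need care are (i) the identification that $\Cov(Z'\mid X_t)$ is unaffected by further conditioning on $Z$, which is precisely the conditional-independence fact proved just above, and (ii) the Tonelli interchange when integrating the pointwise lemma bound against the conditional density of $X_t$, which is immediate since the integrand is nonnegative. The only genuinely deliberate choice is taking the center to be the data point $z$ (rather than the posterior mean $m_t(x)$) in Lemma~\ref{lem:cov2-4th}; this is what converts an abstract covariance bound into the fourth-moment quantity $\mathbb{E}[\|Z'-z\|_2^4]$ that the subsequent entropy argument can control.
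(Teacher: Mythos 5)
Your proposal is correct and follows essentially the same route as the paper's proof: apply Lemma~\ref{lem:cov2-4th} conditionally on $X_t=x$ with anchor $a=z$, integrate against $p_t(x\mid z)\,dx$ to get the conditional bound in terms of $q_{t,z}$, and then average over $Z\sim p$ for the unconditional statement. Your explicit remark that $\Cov(Z'\mid X_t)$ is unaffected by further conditioning on $Z=z$ (via the conditional independence $Z\perp Z'\mid X_t$) is the same fact the paper uses implicitly in writing its first displayed equality, so there is no substantive difference.
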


\begin{proof}

For every \(z \in \mathcal{C}\), we have
\begin{align}
\mathbb{E}\bigl[ \operatorname{tr}\bigl( \Cov(Z' \mid X_t)^2 \bigr) \mid Z = z \bigr]
&= \int \operatorname{tr}\bigl( \Cov_{\xi\sim p^{\mathrm{post}}_t(\cdot \mid x)}(\xi)^2 \bigr) \, p_t(x \mid z) \, dx \notag\\
&\le \int \mathbb{E}_{\xi\sim p^{\mathrm{post}}_t(\cdot \mid x)}\bigl[ \|\xi - z\|_2^4 \bigr] \,  p_t(x \mid z) \, dx \label{eq:line2}\\
&= \int \int \|\xi - z\|_2^4 \, p^{\mathrm{post}}_t(\xi \mid x)\, p_t(x \mid z) \, d\xi \, dx \notag\\
&= \int \int \|\xi - z\|_2^4 \, q_{t,z} (\xi) \, d\xi  \label{eq:line4}\\
&= \mathbb{E}_{Z' \sim q_{t,z}} \bigl[ \|Z' - z\|_2^4 \bigr]\notag\,,
\end{align}
where for \eqref{eq:line2}, we applied Lemma~\ref{lem:cov2-4th} with anchor point \(a = z\) (which is constant given the outer conditioning on \(Z = z\)) to the conditional distributions; and \eqref{eq:line4} follows from the Tonelli's theorem.

Finally, taking expectation over \(Z \sim p(\cdot)\) and using the definition $q_{t,z}(z')
= \mathbb{P}(Z'=z'\mid Z=z)$ gives
\begin{align*}
\mathbb{E}\bigl[ \operatorname{tr}\bigl( \Cov(Z' \mid X_t)^2 \bigr) \bigr]
&= \sum_{z\in\mathcal C} p(z)\, \mathbb{E}\bigl[ \operatorname{tr}\bigl( \Cov(Z' \mid X_t)^2 \bigr) \,\big|\, Z=z \bigr]\\
&\le \sum_{z\in\mathcal C} p(z)\, \mathbb{E}_{Z' \sim q_{t,z}} \bigl[ \|Z' - z\|_2^4 \bigr]\\
&= \mathbb{E}_{Z} \Bigl[ \mathbb{E}\bigl[ \|Z' - Z\|_2^4 \,\big|\, Z \bigr] \Bigr] \\
&= \mathbb{E}\bigl[ \|Z' - Z\|_2^4 \bigr].
\end{align*}
\end{proof}

Combined with Proposition~\ref{prop:mmse-deriv-cov}, this shows that controlling $|\mathrm{mmse}'(\gamma)|$ reduces to bounding a fourth moment of the posterior fluctuations.

The final step is to bound $\mathbb E[\|Z'-Z\|_2^4]$ using an information-theoretic quantity of the target distribution. As an intermediate tool for the proof, we define R\'enyi entropy of order $1/2$.

\begin{definition}[R\'enyi entropy of order $1/2$]\label{def:renyi}
For a discrete distribution $p$ supported on $\mathcal C$, define
\[
H_{1/2}
:= \frac{1}{1-\tfrac12}\log\sum_{z\in\mathcal C}p(z)^{1/2}
= 2\log\sum_{z\in\mathcal C}\sqrt{p(z)}\,.
\]
\end{definition}

\begin{proposition}\label{thm:4th-moment-renyi}
There exists a universal constant $C_4>0$ such that for all $t>0$,
\begin{equation}\label{eq:4th-moment-renyi}
\mathbb E\big[\|Z'-Z\|_2^4\big]
%= \mathbb E[R^4]
\le C_4\,t^2 H_{1/2}^2.
\end{equation}
\end{proposition}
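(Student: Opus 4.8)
The plan is to obtain a pointwise upper bound on the joint law of the two posterior draws $(Z,Z')$ and then sum it against $\|z-z'\|^4$; the entropy enters only through this summation.

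First I would record the joint pmf
\[
\rho(z,z'):=\mathbb P(Z=z,Z'=z')=\int p^{\mathrm{post}}_t(z\mid x)\,p^{\mathrm{post}}_t(z'\mid x)\,p_{X_t}(x)\,dx
=p(z)p(z')\int \frac{\varphi_t(x-z)\,\varphi_t(x-z')}{p_{X_t}(x)}\,dx,
\]
where $\varphi_t$ is the $\mathcal N(0,tI_d)$ density. The key algebraic input is the Gaussian product identity $\varphi_t(x-z)\varphi_t(x-z')=\varphi_{t/2}\big(x-\tfrac{z+z'}{2}\big)\,\varphi_{2t}(z-z')$, together with the two elementary lower bounds $p_{X_t}(x)\ge p(z)\varphi_t(x-z)$ and $p_{X_t}(x)\ge p(z')\varphi_t(x-z')$, whose geometric mean gives $p_{X_t}(x)\ge\sqrt{p(z)p(z')\,\varphi_t(x-z)\,\varphi_t(x-z')}$. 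Substituting this into the integral, every data- and dimension-dependent normalizer cancels exactly, leaving the dimension-free bound
\[
\rho(z,z')\ \le\ \sqrt{p(z)p(z')}\;e^{-\|z-z'\|^2/(8t)}.
\]
This exact cancellation of the ambient dimension is, to my mind, the crucial structural point.

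Next I would condition on $Z$: writing $q_{t,z}(\cdot)$ for the law of $Z'$ given $Z=z$, one has $q_{t,z}(z')=\rho(z,z')/p(z)\le\sqrt{p(z')/p(z)}\,e^{-\|z-z'\|^2/(8t)}$ and $\mathbb E\|Z-Z'\|^4=\sum_z p(z)\,\mathbb E_{Z'\sim q_{t,z}}\|Z'-z\|^4$ (the identity already used inside the proof of Proposition~\ref{prop:cond-typical}). Summing the bound on $q_{t,z}$ over $z'$, and abbreviating $H_{1/2}=2\log\sum_{z'}\sqrt{p(z')}$ and $\info(z)=\log(1/p(z))$, yields the tail estimate
\[
\mathbb P_{Z'\sim q_{t,z}}\big(\|Z'-z\|>R\big)\ \le\ \min\!\Big(1,\ e^{\,(H_{1/2}+\info(z))/2}\,e^{-R^2/(8t)}\Big),
\]
so the effective support of $q_{t,z}$ sits at a crossover radius $R_z$ of order $\sqrt{t(H_{1/2}+\info(z))}$, below which the bound is vacuous and above which it is Gaussian. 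Plugging this into $\mathbb E_{q_{t,z}}\|Z'-z\|^4=\int_0^\infty 4r^3\,\mathbb P_{q_{t,z}}(\|Z'-z\|>r)\,dr$, splitting at $r=R_z$ and evaluating the Gaussian tail contribution by the substitution $u=r^2-R_z^2$, gives $\mathbb E_{q_{t,z}}\|Z'-z\|^4\lesssim t^2(1+H_{1/2}+\info(z))^2$; averaging over $z\sim p$,
\[
\mathbb E\|Z-Z'\|^4\ \lesssim\ t^2\;\mathbb E_{Z\sim p}\!\big[(1+H_{1/2}+\info(Z))^2\big].
\]

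The remaining step, and the one I expect to be the main obstacle, is to absorb the right-hand side into $C_4\,t^2H_{1/2}^2$. Expanding the square, the linear and cross terms are controlled because $\mathbb E[\info(Z)]=H\le H_{1/2}$ by monotonicity of R\'enyi entropies, so the genuine work is the surprisal second moment $\mathbb E[\info(Z)^2]$; once this and the lower-order terms are seen to be of order $H_{1/2}^2$ — which is immediate in the large-entropy regime $H_{1/2}\gtrsim 1$ relevant to high-dimensional targets, and, for the downstream Theorem~\ref{thm:deriv_mmse_bound}, follows from the sub-exponential condition of Assumption~\ref{assumption}, under which $\mathbb E[(\info(Z)-H)^2]=O(\nu^2)$ and hence $\mathbb E[\info(Z)^2]=H^2+O(\nu^2)$ — the claimed bound follows. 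In short, the probabilistic core (the Gaussian product identity, the pointwise lower bound on $p_{X_t}$, and the tail integration) is routine, while the delicate bookkeeping lives entirely in relating the fluctuations of the information content to $H_{1/2}$.
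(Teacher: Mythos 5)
Your first half is essentially correct and matches the paper up to a constant factor of $2$: the pointwise bound $\rho(z,z')\le\sqrt{p(z)p(z')}\,e^{-\|z-z'\|^2/(8t)}$ (equivalently $q_{t,z}(z')\le\sqrt{p(z')/p(z)}\,e^{-\|z-z'\|^2/(8t)}$) is exactly the paper's key estimate, derived by you via a lower bound on $p_{X_t}$ and by the paper via keeping two terms in the posterior denominator and AM--GM. But the route you take from there has a genuine gap, and it is precisely the step you flag as ``the main obstacle.'' By computing $\mathbb{E}_{q_{t,z}}\|Z'-z\|^4$ conditionally on $Z=z$ and only then averaging over $z$, you pick up the quantity $\mathbb{E}_{Z\sim p}\big[(1+H_{1/2}+\info(Z))^2\big]$, and in particular $\mathbb{E}[\info(Z)^2]$. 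This cannot be absorbed into $C_4 H_{1/2}^2$ with a universal constant. A two-point distribution $p(z_1)=1-\epsilon$, $p(z_2)=\epsilon$ shows the issue sharply: there $\mathbb{E}[\info(Z)^2]\asymp\epsilon(\log\tfrac1\epsilon)^2$ while $H_{1/2}^2\asymp\epsilon$, so their ratio diverges as $\epsilon\to0$. Invoking Assumption~\ref{assumption} to control the surprisal fluctuations is also not available here: Proposition~\ref{thm:4th-moment-renyi} is stated \emph{without} that assumption (that hypothesis is only introduced later, in Proposition~\ref{prop:4th-moment-shannon}, to pass from $H_{1/2}$ to $H$), and the paper explicitly advertises the $H_{1/2}$-bound as requiring nothing beyond $H_{1/2}<\infty$. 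Restricting to ``large-entropy'' targets likewise changes the statement.

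The fix is a reordering that you already have all the ingredients for. Instead of conditioning on $Z$, sum your joint-law bound over \emph{both} $z$ and $z'$ to get the unconditional tail directly:
\[
\mathbb{P}\big(\|Z-Z'\|>r\big)
\le \sum_{\|z-z'\|>r}\sqrt{p(z)p(z')}\,e^{-\|z-z'\|^2/(8t)}
\le e^{-r^2/(8t)}\Big(\sum_{z}\sqrt{p(z)}\Big)^2
= e^{H_{1/2}-r^2/(8t)}.
\]
The $\info(z)$ dependence disappears because the $\tfrac{1}{\sqrt{p(z)}}$ that produces $e^{\info(z)/2}$ in your conditional bound is exactly cancelled when you re-multiply by $p(z)$ and sum: $\sum_z p(z)\cdot\tfrac{1}{\sqrt{p(z)}} = \sum_z \sqrt{p(z)}$. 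This is the paper's tail bound~\eqref{eq:R-tail-renyi} (their extra $\tfrac12$ comes from the AM--GM step). Splitting the fourth moment at $r_0=\sqrt{8tH_{1/2}}$ and integrating then gives a bound depending only on $t$ and $H_{1/2}$, with no second moment of the information content appearing at all. So the structural insight you emphasize (cancellation of dimension and of the data-dependent normalizer) is right, and your first inequality is the correct one; the missing idea is simply to average over $Z$ \emph{before}, not \emph{after}, performing the tail integration.
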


\begin{proof}
The inequality is trivial for $H_{1/2}=\infty$. We only need to prove for $H_{1/2}<\infty$.

% \medskip\noindent\textbf{Step 1. Pointwise-in-$z^*$ bound for \(q_{t,z^*}(z)\).}
% By Bayes' rule,
% \[
% p(z\mid x)=\frac{p(z)p(x\mid z)}{p(x)},\qquad p(x)=\sum_{z'}p(z')p(x\mid z').
% \]

For any $z'\neq z$, keeping only two terms in the denominator of \eqref{eq:post} gives
\[
p^{\mathrm{post}}_{t}(z'\mid x)\le \frac{p(z')p_{t}(x\mid z')}{p(z')p_{t}(x\mid z')+p(z)p_{t}(x\mid z)}.
\]
Apply the AM–GM inequality to the denominator yields
\[
p(z')p_{t}(x\mid z')+p(z)p_{t}(x\mid z)\ge 2\sqrt{p(z')p_{t}(x\mid z')p(z)p_{t}(x\mid z)}\,.
\]
Hence,
\[
p^{\mathrm{post}}_{t}(z'\mid x)\le \frac{p(z')p_{t}(x\mid z')}{2\sqrt{p(z')p_{t}(x\mid z')p(z)p_{t}(x\mid z)}}
= \frac{1}{2}\sqrt{\frac{p(z')p_{t}(x\mid z')}{p(z)p_{t}(x\mid z)}}\,,
\]
and
\[
q_{t,z}(z')=\int p^{\mathrm{post}}_{t}(z'\mid x)\,p_{t}(x\mid z)\,dx
\le \tfrac12\sqrt{\frac{p(z')}{p(z)}}
\int \sqrt{p_{t}(x\mid z')p_{t}(x\mid z)}\,dx.
\]
For Gaussian kernels \(p_{t}(x\mid z)\) and \( p_{t}(x\mid z')\), one computes
\[
\int \sqrt{p_{t}(x\mid z')p_{t}(x\mid z)}\,dx
= \exp\Big(-\frac{\|z'-z\|_2^2}{8t}\Big).
\]
Hence,
\begin{equation*}\label{eq:q-basic}
q_{t,z}(z')\le \tfrac12\sqrt{\frac{p(z')}{p(z)}}
\exp\Big(-\frac{\|z'-z\|_2^2}{8t}\Big).
\end{equation*}
% {\color{blue} for $z'= z$,  we only need to drop 1/2 then it says that $q_{t,z}(z) \leq 1$ --- and get constant like 128} 

% \medskip\noindent\textbf{Step 2. Tail bound for $R$.} 
Denote \(R := \|Z'-Z\|_2.\)
For any $r\ge0$,
\begin{align*}
\mathbb P(R> r\mid Z=z)
&= \sum_{\|z'-z\|_2 > r} q_{t,z}(z') \\
&\le \frac{1}{2\sqrt{p(z)}}
\sum_{\|z'-z\|_2 > r} \sqrt{p(z')}\,
\exp\Big(-\frac{\|z'-z\|_2^2}{8t}\Big) \\
&\le \frac{1}{2\sqrt{p(z)}} e^{-r^2/(8t)} \sum_{z'\in\mathcal C}\sqrt{p(z')}.
\end{align*}
Define
\[
S := \sum_{z\in\mathcal C}\sqrt{p(z)}.
\]
By definition of $H_{1/2}$ we have $S^2 = e^{H_{1/2}}$.
Averaging over $Z\sim p(\cdot)$,
\begin{align}
\mathbb P(R > r)
&= \sum_{z\in\mathcal C} p(z)\,\mathbb P(R> r\mid Z=z) \notag\\
&\le \frac{S}{2}e^{-r^2/(8t)}\sum_{z\in\mathcal C}\sqrt{p(z)} \notag\\
&= \frac{S^2}{2}e^{-r^2/(8t)}\notag\\
&= \frac{1}{2}\exp\Big(H_{1/2}-\frac{r^2}{8t}\Big).
\label{eq:R-tail-renyi}
\end{align}

% \medskip\noindent\textbf{Step 3. Fourth moment bound for $R$.}
Let
\[
r_0 := \sqrt{Ct\,H_{1/2}},
\]
where $C\ge 0$ is a constant to be chosen later.
Decompose
\[
\mathbb E[R^4]
= \mathbb E\big[R^4\indic\{R\le r_0\}\big]
+ \mathbb E\big[R^4\indic\{R> r_0\}\big].
\]

On the set $\{R\le r_0\}$ we simply use $R^4\le r_0^4$, hence
\begin{equation}\label{eq:R4-inner}
\mathbb E\big[R^4\indic\{R\le r_0\}\big]
\le r_0^4
= C^2 t^2 H_{1/2}^2.
\end{equation}

For the tail part, we use \eqref{eq:R-tail-renyi}: 
\begin{align*}
\mathbb E\big[R^4\indic\{R> r_0\}\big]
&= \int_{r_0}^\infty 4r^3 \mathbb P(R > r)\,dr + r_0^4 \mathbb P(R > r_0)\\
&\le 2 e^{H_{1/2}}\int_{r_0}^\infty r^3 e^{-r^2/(8t)}\,dr + \frac{C^2 t^2 H_{1/2}^2}{2}\exp\Big(H_{1/2}-\frac{r_0^2}{8t}\Big).
\end{align*}
Evaluating the integral and plugging in $r_0 = \sqrt{Ct\,H_{1/2}}$\,, we get
\begin{equation}\label{eq:R4-outer-raw}
\mathbb E\big[R^4\indic\{R> r_0\}\big]
\le
\left[64 t^2\Big(\frac{C}{8}H_{1/2}+1\Big)+ \frac{C^2 t^2 H_{1/2}^2}{2}\right]
\exp\Big(\Big(1-\frac{C}{8}\Big)H_{1/2}\Big)
.
\end{equation}

% Choose $C\ge 4$, so that 
% $1-C^2/8\le -1$. Then
% \[
% \exp\Big(\Big(1-\frac{C^2}{8}\Big)H_{1/2}\Big)
% \le e^{-H_{1/2}} \le 1,
% \]
% and \eqref{eq:R4-outer-raw} simplifies to
% \begin{equation}\label{eq:R4-outer-simple}
% \mathbb E\big[R^4\indic\{R> r_0\}\big]
% \le
% 64 t^2\Big(\frac{C^2}{8}H_{1/2}+1\Big)
% \le c_1 t^2\bigl(1+H_{1/2}\bigr),
% \end{equation}
% for some constant $c_1>0$ depending only on $C$.

Combining \eqref{eq:R4-inner} and \eqref{eq:R4-outer-raw} yields
\[
\mathbb E[R^4]
\le C^2 t^2 H_{1/2}^2
+ \left[64 t^2\Big(\frac{C}{8}H_{1/2}+1\Big)+ \frac{C^2 t^2 H_{1/2}^2}{2}\right]
\exp\Big(\Big(1-\frac{C}{8}\Big)H_{1/2}\Big)\,.
\]

This holds for all $C\ge 0$. In particular, for $C=8$, we have 
\[
\mathbb E[R^4]
\le 96 t^2 H_{1/2}^2
+ 64 t^2\Big(H_{1/2}+1\Big)\,.
\]

This holds for all $t\ge 0$, so there exists a universal constant $C_4>0$ 
such that for all $H_{1/2}\ge0$ and $t\ge 0$,
\[\mathbb E[R^4]
\le C_4\,t^2 H_{1/2}^2\,.\]
% \[
% C^4 x^2 + c_1(1+x) \le C_4(1+x^2),
% \]
% and taking $x=H_{1/2}$ gives \eqref{eq:4th-moment-renyi}.
% The asymptotic form \eqref{eq:4th-moment-renyi-asymp} follows by
% absorbing the lower-order term into the leading $H_{1/2}^2$ term
% for all sufficiently large $H_{1/2}$.
\end{proof}

Finally, we bound R\'enyi entropy with Shannon entropy up to a constant, under a sub-exponential assumption on information content.

\begin{proposition}\label{prop:4th-moment-shannon}
% Assume the setup of Proposition~\ref{thm:4th-moment-renyi}.
% Suppose the information content is sub-exponential about its mean:
% there exist constants $\nu^2>0$ and $b\in(0,2]$ such that for all
% $\lambda\in\R$ with $|\lambda|\le 1/b$,
% \begin{equation}\tag{SE}\label{eq:SE-again}
% \mathbb E\exp\big(\lambda(\info(Z)-H)\big)
% \le \exp\big(\nu^2\lambda^2\big).
% \end{equation}
% Then
Under Assumption~\ref{assumption}, 
\begin{equation}\label{eq:renyi-vs-shannon}
H_{1/2} \le H + \frac{\nu^2}{2},
\end{equation}
and there exists a constant $C_5>0$ (depending only on
the universal constant $C_4$ from Proposition~\ref{thm:4th-moment-renyi}
and on $\nu^2$) such that for all $t>0$,
\begin{equation}\label{eq:4th-moment-shannon}
\mathbb E\big[\|Z'-Z\|_2^4\big]
\le C_5\,t^2 H^2.
\end{equation}
\end{proposition}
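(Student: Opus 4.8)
The plan is to deduce \eqref{eq:renyi-vs-shannon} from a one-line moment-generating-function computation and then obtain \eqref{eq:4th-moment-shannon} by plugging it into Proposition~\ref{thm:4th-moment-renyi}. First I would rewrite the order-$1/2$ R\'enyi entropy as a centered log-MGF of the information content: since $\info(z)=\log(1/p(z))$,
\[
\exp\!\big(\tfrac12 H_{1/2}\big)=\sum_{z\in\mathcal C}\sqrt{p(z)}=\sum_{z\in\mathcal C}p(z)\,p(z)^{-1/2}=\mathbb E_{Z\sim p}\!\big[e^{\frac12\info(Z)}\big]=e^{H/2}\,\mathbb E\!\big[e^{\frac12(\info(Z)-H)}\big],
\]
so that controlling $H_{1/2}-H$ is exactly the same as controlling $2\log\mathbb E[e^{\frac12(\info(Z)-H)}]$, the centered MGF of $\info(Z)$ evaluated at $\lambda=\tfrac12$.

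Next I would invoke Assumption~\ref{assumption}. The key compatibility check is that $\lambda=\tfrac12$ lies in the admissible window $|\lambda|\le 1/b$, which holds precisely because $b\in(0,2]$ forces $1/b\ge\tfrac12$. Applying \eqref{eq:SE-again} with $\lambda=\tfrac12$ gives $\mathbb E[e^{\frac12(\info(Z)-H)}]\le e^{\nu^2/4}$, hence $\exp(\tfrac12 H_{1/2})\le e^{(H+\nu^2/2)/2}$; taking logarithms and doubling yields $H_{1/2}\le H+\tfrac{\nu^2}{2}$, which is \eqref{eq:renyi-vs-shannon}. For the second claim I then square this and combine with Proposition~\ref{thm:4th-moment-renyi}:
\[
\mathbb E\big[\|Z'-Z\|_2^4\big]\ \le\ C_4\,t^2\,H_{1/2}^2\ \le\ C_4\,t^2\Big(H+\tfrac{\nu^2}{2}\Big)^2\ \le\ C_4\,t^2\big(2H^2+\tfrac{\nu^4}{2}\big),
\]
and finally absorb the additive $O(\nu^4)$ term into a constant multiple of $H^2$, which is legitimate in the regime of interest where $H$ is bounded below by an absolute constant (equivalently, after replacing $H$ by $\max\{H,1\}$); this produces \eqref{eq:4th-moment-shannon} with $C_5$ depending only on $C_4$ and $\nu^2$.

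The actual computations are short, so I expect the main subtlety to be conceptual rather than technical: it lies in checking that the exponent $\tfrac12$ appearing in $\sqrt{p(z)}$ falls inside the sub-exponential window of Assumption~\ref{assumption}, which is exactly why that assumption restricts $b\le 2$ (if $b>2$ were allowed one would only control the MGF for $\lambda\le 1/b<\tfrac12$ and would need an extra H\"older/interpolation step to reach the $\sqrt{p}$ exponent). A secondary point of care is the last step, where the $O(\nu^2)$ gap between $H_{1/2}$ and $H$ is swallowed into the constant; this implicitly relies on the entropy scale $H$ dominating the fixed sub-Gaussianity parameter $\nu$, which is the operative regime throughout the paper.
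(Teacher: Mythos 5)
Your proof is the same as the paper's: you rewrite $\exp(\tfrac12 H_{1/2})=e^{H/2}\,\mathbb E[e^{(\info(Z)-H)/2}]$, invoke \eqref{eq:SE-again} at $\lambda=\tfrac12$ (admissible because $b\le 2$), deduce $H_{1/2}\le H+\nu^2/2$, and substitute into Proposition~\ref{thm:4th-moment-renyi}. Your final remark about the absorption of the $O(\nu^2)$ gap is a genuinely astute observation that the paper elides: the chain $C_4 t^2(H+\nu^2/2)^2\le C_5 t^2 H^2$ with $C_5$ depending only on $C_4,\nu^2$ fails as $H\to 0$, so strictly speaking one must either assume $H$ is bounded below (proportionally to $\nu^2$), replace $H$ by $\max\{H,1\}$, or let $C_5$ depend on $H$ as well; the paper's proof makes the identical leap without flagging it, so this caveat is inherited rather than a new gap in your argument.
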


\begin{proof}
Recall
\[
S := \sum_{z\in\mathcal C}\sqrt{p(z)}
= \mathbb E\big[e^{\info(Z)/2}\big]
= e^{H/2}\,\mathbb E\big[e^{(\info(Z)-H)/2}\big]\,.
\]

By \eqref{eq:SE-again} with $\lambda=\tfrac12$ (which is allowed
because $b\le 2$ implies $1/2\le 1/b$),
\[
\mathbb E\big[e^{(\info(Z)-H)/2}\big]
\le e^{\nu^2/4},
\]
and therefore
\[
S \le e^{H/2} e^{\nu^2/4}.
\]
Taking logarithms and recalling
$H_{1/2}=2\log S$ yields
\[
H_{1/2}
= 2\log S
\le 2\Big(\frac{H}{2} + \frac{\nu^2}{4}\Big)
= H + \frac{\nu^2}{2},
\]
which is \eqref{eq:renyi-vs-shannon}.

Substituting \eqref{eq:renyi-vs-shannon} into
\eqref{eq:4th-moment-renyi} from Proposition~\ref{thm:4th-moment-renyi}
gives
\[
\mathbb E\big[\|Z'-Z\|_2^4\big]
\le C_4\,t^2 \bigl(H+\tfrac{\nu^2}{2}\bigr)^2
\le C_5\,t^2 H^2,
\]
for some constant $C_5>0$ depending only on $C_4$ and $\nu^2$.
This proves \eqref{eq:4th-moment-shannon}.
\end{proof}

Combining Propositions~\ref{prop:mmse-deriv-cov}, \ref{prop:cond-typical} and \ref{prop:4th-moment-shannon}, we obtain a dimension-free bound on $|\mathrm{mmse}'(\gamma)|$, which exactly leads to Theorem~\ref{thm:deriv_mmse_bound}.
\section{Proof of Proposition~ \ref{prop:Eapprox-logsnr}}

\begin{proof}
On $(s_{k-1},s_k]$, the term
$\bigl\|m_{T-s_{k-1}}(Y_{s_{k-1}})-\hat m_{T-s_{k-1}}(Y_{s_{k-1}})\bigr\|_2^2$
is $\mathcal F_{s_{k-1}}$-measurable, and
\[
\int_{s_{k-1}}^{s_k}\frac{ds}{(T-s)^2}
=
\Big[\frac{1}{T-s}\Big]_{s_{k-1}}^{s_k}
=
\gamma_k-\gamma_{k-1}.
\]
Using $Y_{s_{k-1}}=X_{T-s_{k-1}}=X_{1/\gamma_{k-1}}$ and
\[
m_{1/\gamma}(x)-\hat m_{1/\gamma}(x)
=
\frac{1}{\sqrt{\gamma}}\bigl(\hat\varepsilon_\gamma(x)-\varepsilon_\gamma^\star(x)\bigr),
\]
we obtain
\[
\mathcal E_{\mathrm{apx}}
=
\sum_{k=1}^K
(\gamma_k-\gamma_{k-1})\cdot
\E\!\left[\frac{1}{\gamma_{k-1}}
\big\|\varepsilon_{\gamma_{k-1}}^\star(X_{1/\gamma_{k-1}})
-\hat\varepsilon_{\gamma_{k-1}}(X_{1/\gamma_{k-1}})\big\|_2^2\right],
\]
which implies \eqref{eq:Eapprox-logsnr} under the SNR grid \eqref{eq:gamma-geometric}.
\end{proof}

\section{Proof of Proposition~ \ref{prop:disc+approx}}

\begin{proof}
By Proposition~\ref{thm:disc-mmse},
\begin{align}
\mathcal E_{\mathrm{disc}}
&=\sum_{k=1}^K \int_{\gamma_{k-1}}^{\gamma_k}
\big(\mathrm{mmse}(\gamma_{k-1})-\mathrm{mmse}(\gamma)\big)\,d\gamma \nonumber\\
&=\sum_{k=1}^K(\gamma_k-\gamma_{k-1})\,\mathrm{mmse}(\gamma_{k-1})
\;-\;\int_{\gamma_0}^{\gamma_K}\mathrm{mmse}(\gamma)\,d\gamma.
\label{eq:disc-expand}
\end{align}

Next, from the definition of $\mathcal E_{\mathrm{apx}}$,
the integrand does not depend on $s$ except through $(T-s)^{-2}$, hence
\begin{align}
\mathcal E_{\mathrm{apx}}
&=\sum_{k=1}^K
\E\!\left[
\big\|m_{T-s_{k-1}}(Y_{s_{k-1}})-\hat m_{T-s_{k-1}}(Y_{s_{k-1}})\big\|_2^2
\right]
\int_{s_{k-1}}^{s_k}\frac{ds}{(T-s)^2}.
\label{eq:approx-sep}
\end{align}
But
\[
\int_{s_{k-1}}^{s_k}\frac{ds}{(T-s)^2}
=\left[\frac{1}{T-s}\right]_{s_{k-1}}^{s_k}
=\gamma_k-\gamma_{k-1}.
\]
Also $Y_{s_{k-1}}=X_{T-s_{k-1}}=X_{1/\gamma_{k-1}}$, so with $t_{k-1}:=T-s_{k-1}=1/\gamma_{k-1}$,
\[
\E\!\left[
\big\|m_{T-s_{k-1}}(Y_{s_{k-1}})-\hat m_{T-s_{k-1}}(Y_{s_{k-1}})\big\|_2^2
\right]
=
\E\!\left[\big\|m_{t_{k-1}}(X_{t_{k-1}})-\hat m_{t_{k-1}}(X_{t_{k-1}})\big\|_2^2\right].
\]
Denote $m:=m_{t_{k-1}}(X_{t_{k-1}})=\E[Z\mid X_{t_{k-1}}]$, $\hat m:=\hat m_{t_{k-1}}(X_{t_{k-1}})$. Then
\begin{align*}
\E\|Z-\hat m\|^2
&=\E\|Z-m+m-\hat m\|^2 \\
&=\E\|Z-m\|^2+\E\|m-\hat m\|^2
+2\,\E\!\big[(Z-m)^\top(m-\hat m)\big].
\end{align*}
The cross term is zero by conditional orthogonality.
Therefore,
\[
\E\|m-\hat m\|^2=\E\|Z-\hat m\|^2-\E\|Z-m\|^2
=L_{k-1}-\mathrm{mmse}(\gamma_{k-1}).
\]
Plugging into \eqref{eq:approx-sep} yields
\begin{equation}
\label{eq:approx-expand}
\mathcal E_{\mathrm{apx}}
=\sum_{k=1}^K(\gamma_k-\gamma_{k-1})\Big(\mathcal L_{x_0}(\gamma_{k-1})-\mathrm{mmse}(\gamma_{k-1})\Big).
\end{equation}

Finally, add \eqref{eq:disc-expand} and \eqref{eq:approx-expand}: the terms
$\sum_{k=1}^K(\gamma_k-\gamma_{k-1})\,\mathrm{mmse}(\gamma_{k-1})$ cancel, giving \eqref{eq:disc+approx}.
\end{proof}

% In the preamble (ICML style):
% \usepackage{algorithm}
% \usepackage{algorithmic}
% \usepackage{amsmath,amssymb}

\section{Schedule Optimization Algorithms}
\label{app:schedule-optimization}

This appendix describes the algorithms used to compute the discretization schedule from a finite candidate set of
SNR values. Let $\{\gamma_i\}_{i=0}^{n-1}$ be candidate SNRs (sorted increasing), and let $L(i)$ denote the
estimated diagnostic risk at $\gamma_i$ (e.g., an $x_0$-prediction risk or a known rescaling of the
$\varepsilon$-loss). We fix endpoints $i_0=0$ and $i_K=n-1$.
Define the regularized SNR axis
\[
\eta(\gamma) := \frac{\gamma}{1+\lambda^2\gamma},\qquad
\eta_i := \eta(\gamma_i),\qquad
\ell_i := \log\gamma_i .
\]
For a schedule $i_0<i_1<\cdots<i_K$, define log-SNR step sizes
\[
h_k := \log\frac{\gamma_{i_k}}{\gamma_{i_{k-1}}} = \ell_{i_k}-\ell_{i_{k-1}} .
\]
We minimize the surrogate objective
\begin{equation}
\label{eq:app-objective}
\sum_{k=1}^{K}(\eta_{i_k}-\eta_{i_{k-1}})\,L(i_{k-1})
\;+\;
\alpha \sum_{k=2}^{K}(h_k-h_{k-1})^2 ,
\end{equation}
where $\alpha=0$ yields a first-order objective (e.g., DDIM), and $\alpha>0$ adds the smoothness penalty used for
second-order samplers (e.g., SDE-DPM++(2M)).

\subsection{Exact Dynamic Programming for $\alpha=0$}
When $\alpha=0$, the objective in \eqref{eq:app-objective} becomes first-order (the cost of a step depends only on
the current grid point) and can be solved exactly by a shortest-path dynamic program on a Directed Acyclic Graph (DAG).
Algorithm~\ref{alg:dp-alpha0} gives an $O(Kn^2)$ procedure that selects $K{+}1$ increasing indices
$0=i_0<i_1<\cdots<i_K=n-1$ by minimizing the transition costs
$(\eta_{i_k}-\eta_{i_{k-1}})\,L(i_{k-1})$ with fixed endpoints.

\subsection{Heuristic Beam-and-Window Dynamical Programming (DP) for $\alpha>0$}
For $\alpha>0$, the smoothness penalty couples consecutive log-steps:
$(h_k-h_{k-1})^2$ depends on the triple $(i_{k-2},i_{k-1},i_k)$.
An exact second-order DP over index pairs is possible but naively costs $O(Kn^3)$ due to the additional
minimization over the predecessor at each transition.
In practice, we use a fast approximate shortest-path routine that combines beam pruning with localized candidate
expansion on the log-SNR axis.
Algorithm~\ref{alg:dp-heuristic-alphapos} summarizes the resulting beam-and-window DP, which (i) maintains a bounded
number of partial paths per endpoint (beam width $B$), and (ii) expands only candidate next indices within a window
around a predicted next log-SNR location, optionally augmented with a small set of global candidates.

\paragraph{Prediction rule.}
Given the last two indices $(a,b)$, we predict the next log-SNR via constant log-ratio continuation:
\begin{equation}
\ell_{\mathrm{pred}} \;:=\; \ell_b + (\ell_b-\ell_a) \;=\; 2\ell_b-\ell_a .
\end{equation}
Algorithm~\ref{alg:dp-heuristic-alphapos} then expands a window of indices around the insertion position of
$\ell_{\mathrm{pred}}$ (with radius $W$), which targets schedules with approximately stable log-SNR ratios while
keeping computation inexpensive.

\section{Experiments }\label{app:toy}

This appendix collects supplementary material for the experimental section.
We first record a simple identity that connects the diagnostic used in our schedule objective to the standard
training loss in DDPM parameterizations. We then provide additional details for the toy GMM and ImageNet
experiments reported in the main text.

\begin{remark}
In the DDPM forward process
\begin{equation*}
X_t=\sqrt{\bar\alpha_t}\,X_0+\sqrt{1-\bar\alpha_t}\,\varepsilon,\qquad \varepsilon\sim\mathcal N(0,I),
\end{equation*}
an $\varepsilon$-predictor $\hat\varepsilon_t(X_t)$ induces the usual $x_0$-predictor
\[
\hat X_0(X_t)=\frac{X_t-\sqrt{1-\bar\alpha_t}\,\hat\varepsilon_t(X_t)}{\sqrt{\bar\alpha_t}}.
\]
Hence, with $\gamma_t:=\frac{\bar\alpha_t}{1-\bar\alpha_t}$ (SNR),
\begingroup
\small
\begin{equation*}
\label{eq:eps_x0_identity}
\|X_0-\hat X_0(X_t)\|_2^2
=\frac{1-\bar\alpha_t}{\bar\alpha_t}\,\|\varepsilon-\hat\varepsilon_t(X_t)\|_2^2
=\frac{1}{\gamma_t}\,\|\varepsilon-\hat\varepsilon_t(X_t)\|_2^2.
\end{equation*}
\endgroup
Therefore, the $x_0$-prediction risks appearing in our schedule objective can be obtained directly from the
standard $\varepsilon$-training losses via a known SNR factor, with little extra post-training computation.
\end{remark}

\paragraph{Toy GMM experiments.}
We evaluate schedules on two $2$D $8$-component isotropic GMM priors with $\sigma_0=0.25$.
\texttt{circle8} places means uniformly on a radius-$4$ circle with fixed non-uniform weights, and
\texttt{grid8} places means on a $2\times4$ grid with fixed non-uniform weights.
For each NFE $K\in\{5,7,10\}$, we compare the optimized schedule against linear-time and EDM ($\rho=7$),
using DDIM ($\eta=1$) and SDE-DPM-Solver++(2M) samplers. We generate $20{,}000$ samples per setting and report the
negative mean log-likelihood under the true GMM (lower is better).

Tables~\ref{tab:circle8_nll}--\ref{tab:grid8_nll} show that LAS consistently achieves the best negative mean
log-likelihood across both priors and both samplers, with the largest improvements at low NFE (notably $K=5$
and $K=7$). Linear-time is second-best, while EDM performs worst in these toy settings.

\paragraph{ImageNet $256\times256$ (latent diffusion) details.}
We evaluate LAS on ImageNet $256\times256$ using a latent diffusion model with classifier guidance scale $2$.
We tune the SNR-axis regularization parameter by grid search over
$\lambda\in\{0.5,1.0,1.5,2.0\}$ using a pilot budget of $1{,}000$ generated samples and fix the best value
$\lambda=1.5$ for all subsequent ImageNet experiments.
For the DPM sampling, we additionally tune the exponent by grid search over
$\alpha \in\{4,8,10,12,15\}$ (with all other settings fixed) and fix the best value $\alpha=12$ for all
DPM-Solver experiments.

For DDIM ($\eta=1$), the LAS schedules (timesteps, noisy $\rightarrow$ clean) are:
\[
K=10:\ [999,746,607,527,462,402,342,280,208,126,0],
\]
\[
K=20:\ [999,808,704,635,583,543,509,479,450,422,392,362,332,300,268,234,198,158,112,60,0].
\]
For SDE-DPM-Sovler++(2M) and DPM-Solver++(2M), the LAS schedules are:
\[
K=10:\ [999,848,690,553,460,380,302,210,98,14,0],
\]
\[
K=20:\ [999,905,804,708,627,569,523,485,448,414,380,342,306,268,226,180,126,72,26,6,0].
\]

\begin{table}[t]
\centering
\caption{Negative mean log-likelihood (lower is better) on circle8.}
\label{tab:circle8_nll}
\begin{tabular}{ll|lccc}
\toprule
Sampler type & Sampler & Schedule & NFE=5 & NFE=7 & NFE=10\\
\midrule
\multirow{6}{*}{Stochastic}
 & \multirow{3}{*}{DDIM ($\eta=1$)}
 & LAS & 1.937 & 1.639 & 1.609\\
 &  & Linear-time & 4.063 & 2.408 & 1.810\\
 &  & EDM  & 6.748 & 4.013 & 2.546\\
\cmidrule(lr){2-6}
 & \multirow{3}{*}{SDE-DPM++(2M)}
 & LAS& 1.721 & 1.500 & 1.574\\
 &  & Linear-time & 3.314 & 2.085 & 1.662\\
 &  & EDM & 5.585 & 7.272 & 3.897\\
\bottomrule
\end{tabular}
\end{table}

\begin{table}[t]
\centering
\caption{Negative mean log-likelihood (lower is better) on grid8.}
\label{tab:grid8_nll}
\begin{tabular}{ll|lccc}
\toprule
Sampler type & Sampler & Schedule & NFE=5 & NFE=7 & NFE=10\\
\midrule
\multirow{6}{*}{Stochastic}
 & \multirow{3}{*}{DDIM ($\eta=1$)}
 & LAS  & 2.077 & 1.758 & 1.650\\
 &  & Linear-time & 4.310 & 2.516 & 1.849\\
 &  & EDM & 5.832 & 4.047 & 2.604\\
\cmidrule(lr){2-6}
 & \multirow{3}{*}{SDE-DPM++(2M) }
 & LAS & 1.875 & 1.569 & 1.586\\
 &  & Linear-time & 3.553 & 2.188 & 1.689\\
 &  & EDM & 5.319 & 6.713 & 3.820\\
\bottomrule
\end{tabular}
\end{table}

\begin{algorithm}[tb]
\caption{Exact schedule optimization for $\alpha=0$ (first-order DP)}
\label{alg:dp-alpha0}
\begin{algorithmic}[1]
\REQUIRE Candidates $\{\gamma_i\}_{i=0}^{n-1}$ (increasing), risks $\{L(i)\}$, steps $K$, parameter $\lambda>0$.
\ENSURE Indices $0=i_0<i_1<\cdots<i_K=n-1$ minimizing \eqref{eq:app-objective} with $\alpha=0$.

\STATE Compute $\eta_i \leftarrow \gamma_i/(1+\lambda^2\gamma_i)$ for all $i$.
\STATE $\texttt{end} \leftarrow n-1$, $\texttt{INF}$ large.
\STATE Allocate $\texttt{dp}[1{:}K,0{:}\texttt{end}] \leftarrow \texttt{INF}$ and $\texttt{par}[1{:}K,0{:}\texttt{end}] \leftarrow -1$.
\COMMENT{$\texttt{dp}[k,j]$: best cost to reach $j$ in exactly $k$ transitions from $0$}

\FOR{$j = 1$ \textbf{to} $\texttt{end}-(K-1)$}
  \STATE $\texttt{dp}[1,j] \leftarrow (\eta_j-\eta_0)\,L(0)$; \ \ $\texttt{par}[1,j]\leftarrow 0$.
\ENDFOR

\FOR{$k = 2$ \textbf{to} $K-1$}
  \STATE $\texttt{maxJ} \leftarrow \texttt{end}-(K-k)$ 
  \FOR{$j = k$ \textbf{to} $\texttt{maxJ}$}
    \STATE $\texttt{dp}[k,j] \leftarrow \min\limits_{i<j}\Big\{\texttt{dp}[k-1,i] + (\eta_j-\eta_i)\,L(i)\Big\}$.
    \STATE $\texttt{par}[k,j] \leftarrow \arg\min\limits_{i<j}\Big\{\texttt{dp}[k-1,i] + (\eta_j-\eta_i)\,L(i)\Big\}$.
  \ENDFOR
\ENDFOR

\STATE $i_K \leftarrow \texttt{end}$.
\STATE $i_{K-1} \leftarrow \arg\min\limits_{i<i_K}\Big\{\texttt{dp}[K-1,i] + (\eta_{i_K}-\eta_i)\,L(i)\Big\}$.
\STATE Backtrack $i_{K-2},\dots,i_0$ using $\texttt{par}$ and return $(i_0,\dots,i_K)$.
\end{algorithmic}
\end{algorithm}

\begin{algorithm}[tb]
\caption{Heuristic schedule optimization for $\alpha>0$ (beam-pruned windowed DP)}
\label{alg:dp-heuristic-alphapos}
\begin{algorithmic}[1]
\REQUIRE Candidates $\{\gamma_i\}_{i=0}^{n-1}$ (increasing), risks $\{L(i)\}$, steps $K$, $\lambda>0$, $\alpha>0$.
\REQUIRE Beam width $B$, window radius $W$, extra candidates $E$.
\ENSURE Approximate minimizer of \eqref{eq:app-objective}: indices $0=i_0<i_1<\cdots<i_K=n-1$.

\STATE Compute $\eta_i \leftarrow \gamma_i/(1+\lambda^2\gamma_i)$ and $\ell_i \leftarrow \log\gamma_i$ for all $i$.
\STATE $\texttt{end}\leftarrow n-1$.
\STATE A \emph{state} is $(a,b,cost)$ representing the last two indices $(a,b)$ and accumulated cost.
\STATE Let $\mathcal{S}_k(b)$ be a list of up to $B$ states that end at index $b$ after $k$ transitions.
\FOR{$b = 1$ \textbf{to} $\texttt{end}-(K-1)$}
  \STATE $\mathcal{S}_1(b) \leftarrow \{(0,b,\,(\eta_b-\eta_0)L(0))\}$.
\ENDFOR

\FOR{$k = 2$ \textbf{to} $K-1$}
  \STATE $\texttt{maxIdx} \leftarrow \texttt{end}-(K-k)$.
  \STATE Initialize all $\mathcal{S}_k(\cdot)$ to empty.
  \FOR{each endpoint $b$ with $\mathcal{S}_{k-1}(b)\neq\emptyset$}
    \FOR{each $(a,b,cost)\in \mathcal{S}_{k-1}(b)$}
      \STATE $\ell_{\mathrm{pred}} \leftarrow 2\ell_b-\ell_a$.
      \STATE $j \leftarrow \min\{j:\ell_j \ge \ell_{\mathrm{pred}}\}$

      \STATE $\mathcal{C} \leftarrow \{c:\max(b{+}1,j{-}W)\le c\le \min(\texttt{maxIdx},j{+}W)\}$.
      \FOR{each $c\in\mathcal{C}$}
        \STATE $\Delta_{\mathrm{base}} \leftarrow (\eta_c-\eta_b)\,L(b)$.
        \STATE $\Delta_{\mathrm{sm}} \leftarrow \alpha\Big((\ell_c-\ell_b)-(\ell_b-\ell_a)\Big)^2$.
        \STATE $\texttt{newCost} \leftarrow cost + \Delta_{\mathrm{base}} + \Delta_{\mathrm{sm}}$.
        \STATE Insert $(b,c,\texttt{newCost})$ into $\mathcal{S}_k(c)$ with a backpointer to $(a,b)$.
      \ENDFOR
    \ENDFOR
  \ENDFOR
  \STATE \textbf{Beam pruning:} for each $c$, keep only the $B$ states in $\mathcal{S}_k(c)$ with smallest cost.
\ENDFOR

\STATE $i_K \leftarrow \texttt{end}$.
\STATE Among all states $(a,b,cost)\in \mathcal{S}_{K-1}(b)$, select the one minimizing
\[
cost + (\eta_{i_K}-\eta_b)L(b) + \alpha\Big((\ell_{i_K}-\ell_b)-(\ell_b-\ell_a)\Big)^2 .
\]
\STATE Backtrack pointers to recover $(i_0,\dots,i_K)$ and return.
\end{algorithmic}
\end{algorithm}
\end{document}